\newtheorem{theorem}{Theorem}[section]
\newtheorem{definition}{Definition}
\newtheorem{proposition}[theorem]{Proposition}
\newtheorem{corollary}[theorem]{Corollary}
\newtheorem{assumption}{Assumption}
\newtheorem{lemma}[theorem]{Lemma}
\def\thm@space@setup{%
  \thm@preskip=\parskip \thm@postskip=0pt
}
\newcommand{\CLSI}[0]{C_{\mathsf{LSI}}}
\title{Private Continuous-Time Synthetic Trajectory Generation via Mean-Field Langevin Dynamics}
\author{
  Anming Gu\thanks{Part of this work was completed while AG was affiliated with Boston University.} \\
  UT Austin \\
  \texttt{anminggu@utexas.edu} \\
  % examples of more authors
  \And
  Edward Chien \\
  Boston University \\
  \texttt{edchien@bu.edu} \\
  \AND
  Kristjan Greenewald \\
  MIT-IBM Watson AI Lab; IBM Research \\
  % Address \\
\texttt{kristjan.h.greenewald@ibm.com} \\
  % \And
  % Coauthor \\
  % Affiliation \\
  % Address \\
  % \texttt{email} \\
  % \And
  % Coauthor \\
  % Affiliation \\
  % Address \\
  % \texttt{email} \\
}
\begin{document}

\maketitle

\begin{abstract}
We provide an algorithm to privately generate continuous-time data (e.g. marginals from stochastic differential equations), which has applications in highly sensitive domains involving time-series data such as healthcare. We leverage the connections between trajectory inference and continuous-time synthetic data generation, along with a computational method based on mean-field Langevin dynamics. As discretized mean-field Langevin dynamics and noisy particle gradient descent are equivalent, DP results for noisy SGD can be applied to our setting. We provide experiments that generate realistic trajectories
on a synthesized variation of hand-drawn MNIST data while maintaining meaningful privacy guarantees. Crucially, our method has strong utility guarantees under the setting where each person contributes data for \emph{only one time point}, while prior methods require each person to contribute their \emph{entire temporal trajectory}--directly improving the privacy characteristics by construction.
\end{abstract}

\section{Introduction}
In the era of modern machine learning, privacy has become increasingly important for users and the industry serving them. Differential privacy (DP) \cite{dwork2006dp} has become the de-facto standard for guiding methods of preserving user privacy. As DP (informally) works by bounding how much a single user can influence the output of a computation, it offers a \emph{worst-case} guarantee that composes gracefully across multiple analyses. As a result, it has seen widespread adoption beyond its initiation as a theoretical framework, finding extensive applications in industry \cite{apple_dp,ding2017microsoft,rogers2020linkedin} and government organizations \cite{abowd2018us,hod2024dp_israel}.  

Many approaches to achieve DP involve releasing noisy answers to a series of computation; however, the variance increases with the number of queries \cite{dwork2014algorithmic}, which degrades their utility. Furthermore, these approaches often only give utility guarantees for a fixed set of queries and not more complicated machine learning tasks such as classification. Thus, a promising approach for preserving privacy is \emph{synthetic data generation}, where the goal is to create a high-quality copy of a dataset under the constraints of differential privacy. Any subsequent tasks performed on this dataset are guaranteed to remain private due to the post-processing property of differential privacy. 

    Much work on synthetic data generation focuses on the tabular setting, where each row of the dataset contains an individual's records. On the other hand, trajectory (or time-series) data can be thought of as a concatenation of many tables at multiple time periods. Techniques to provide privacy for tabular synthetic data generation often do not work well in the time-series setting: since many of these methods leverage matching marginals in the binned space, the inherently high dimension of time series data (due to the time axis) creates a severe \emph{curse of dimensionality} \cite{ullman2011pcps}. Furthermore, any approach simply treating trajectory data as high-dimensional tabular records (not exploiting the temporal structure) inherently requires each person to contribute their full trajectory. In the present work, we seek to leverage temporal structure and prior trajectory inference work to avoid this downside, directly reducing the data required from the user to a single snapshot in time, directly improving the privacy characteristics by never revealing the vast majority of the user's data and potentially greatly reducing the complexity of data collection. %, necessitating a large amount of data under small privacy budgets. 

\subsection{Contributions}
We introduce a novel algorithm to generate private continuous-time data based on noisy particle gradient descent, overcoming key shortcomings of prior methods. Our approach has two key advantages:

\begin{itemize}
    \item Unlike previous methods that require users to provide full trajectory data (e.g. user-level data instead of event-level data), our method only requires each user to provide \emph{one data point}, which naturally has significant potential for privacy preservation (See Section \ref{sec:problem_statement}).
\item As far as we are aware, we are the first to provide any theoretical guarantees for this problem. We give statistical guarantees on both temporal marginal and trajectory recovery, which both become strong in the large temporal limit (See Theorem \ref{thm:statistical_convergence}).
\end{itemize}

\subsection{Related Work}
\paragraph{Trajectory inference} Trajectory inference aims to estimate the temporal dynamics and trajectories of a population from collections of unpaired observations, allowing for continuous-time data generation by sampling from the learned distribution. This problem has numerous applications in single-cell genomic data analysis \cite{weinreb2018limits,schiebinger2019reprogramming,saelens2019}. A line of works \cite{lavenant2024traj,chizat2022trajectoryinferencemeanfieldlangevin,ventre2024trajectoryinferencebranchingsde,shen2024multi,gu2025partially} consider trajectory inference with provable guarantees via the theory of entropic optimal transport \cite{peyre2019computational}. In particular, \cite{lavenant2024traj} initiated a mathematical framework for trajectory inference as optimization of the Kullback-Leibler (KL) divergence against the Wiener path measure; \cite{chizat2022trajectoryinferencemeanfieldlangevin} considers a computationally effective algorithm via mean-field Langevin dynamics; \cite{ventre2024trajectoryinferencebranchingsde} provided theoretical guarantees for optimization against branching Brownian motions; and \cite{gu2025partially} consider theoretical guarantees for optimization against more general path measures. Other works have also investigate the applications of neural networks for trajectory inference \cite{tong2023simulation,zhang2025learning}.

\paragraph{Private synthetic data}
Many existing works on private synthetic data generation work on tabular data \cite{bellovin2019privacy}, and, they often only come with utility guarantees with respect to a fixed set of linear queries \cite{barak2007privacy,thaler2012faster}. Further, it is known that under standard cryptographic assumptions (the existence of one-way functions), it is NP-hard to match two-way marginals of Boolean data \cite{ullman2011pcps}. On the machine learning side of things, a number of works have considered using neural networks for synthetic data. While these naturally apply to both tabular and non-tabular data, these methods lack theoretical guarantees. Initial approaches utilize generative adversarial networks (GANs) \cite{jordon2018pate,xie2018differentially,torkzadehmahani2019dp}, but GANs are known to exhibit modal collapse \cite{salimans2016improved,arjovsky2017towards} and are generally regarded as inferior for private synthetic data generation. More recently, training models such as diffusion models under DP have been promising, for instance see \cite{dockhorn2023differentially} for private training of diffusion models for image data.

For private trajectory synthesis, see \cite{jiang2021location} for a survey. Most methods that consider private trajectory synthesis require gridding the support of the data space \cite{yang2022collecting,du2023ldptrace,wang2023privtrace}.
These methods generally use private counts on disjoint grids, then fit a Markovian model to synthesize trajectories. Thus, these methods both suffer from the curse of dimensionality and lack statistical and utility guarantees. As far as we are aware, \cite{ghazi2022private} is the first paper to consider utility guarantees for any private time-series data. In particular, they aim to aggregate user-generated trajectories (e.g. find a mean trajectory), which can be considered a special case of our problem. However, \emph{all} of these methods require full trajectory data from each user, which thus require much more examples for good utility (for a fixed privacy budget) compared to our method. 

A line of works have also considered private measures with respect to the optimal transport metric \cite{boedihardjo2024private,he2023algorithmically,he2024online,he2025differentially,greenewald2024privacy}. These works treat the private dataset as an empirical measure, and the goal is to generate private synthetic data (another empirical measure) that is close in Wasserstein distance to the private empirical measure. The most algorithmically similar work to ours is \cite{donhauser2024privacy}, which also leverages optimal transport and particle gradient descent to compute private datasets, albeit they only consider \emph{tabular} datasets and also lack utility guarantees. At a high level, they aim to match marginals of the dataset using the sliced Wasserstein distance \cite{bonneel2015sliced}, and they leverage gradient descent on particles to match these marginals.

% need to discuss works that study synthetic data generation 

% synthetic data generation for trajectory data

% some based on OT (we can cite our barycenter work)

% \subsection{Organization}
% In Section \ref{sec:prelim} we provide preliminaries. In Section \ref{sec:main}, we provide main results on our algorithm and its privacy analysis. In Section \ref{sec:improved_privacy}, we discuss how to improve the privacy of our algorithm via various techniques. In Section \ref{sec:experiments}, we provide experiments to test our method. Finally, in Section \ref{sec:conclusion}, we conclude with some future directions. 

\section{Preliminaries}\label{sec:prelim}
\paragraph{Notation} We use $[n] := \{1, \dots, n\}$. For probability measures $\mu,\,\nu$, the relative entropy (e.g. the KL divergence) is $H(\mu|\nu)=\int\log(d\mu/d\nu)d\mu$ if $\mu\ll \nu$ and $+\infty$ otherwise. We use the notation $\mathcal{P}(\cdot)$ to denote the probability distributions over a space. The path space is $\Omega = C([0,1]:\mathcal{X})$, the set of continuous $\mathcal{X}$-valued paths. If $\mathbf{R}\in\mathcal{P}(\Omega)$ is a probability measure on the space of paths, its marginal at time $t$ is denoted as $\mathbf{R}_t\in\mathcal{P}(\mathcal{X})$.

\subsection{Differential privacy}
Differential privacy (DP) \cite{dwork2006dp} is a mathematical framework for establishing guarantees on privacy loss of an algorithm, with nice properties such as degradation of privacy loss under composition and robustness to post-processing. We provide a brief introduction and refer to \cite{dwork2014algorithmic} for a thorough treatment. 

\begin{definition}[$(\epsilon,\delta)$-DP]
Algorithm $\mathcal{A}$ is said to satisfy $(\epsilon,\delta)$-differential privacy if for all adjacent datasets $\mathcal{D}, \mathcal{D}'$ (datasets differing in at most one element) and all $\mathcal{S} \subseteq \operatorname{range}\mathcal{A}$, it holds $\Pr[\mathcal{A}(\mathcal{D})\in \mathcal{S}]\le e^\epsilon\Pr[\mathcal{A}(\mathcal{D}')\in \mathcal{S}] + \delta$.

\end{definition}

% We also state some basic primitives of DP.

% \begin{definition}[$\ell_p$-sensitivity]
% We define the $\ell_p$-sensitivity of a function $f$ to be \[
% \Delta_p f:= \max_{\mathcal{D}, \mathcal{D}'}\|f(\mathcal{D}) - f(\mathcal{D}')\|_p,
% \]
% where $\mathcal{D}, \mathcal{D}'$ are adjacent datasets. 
% \end{definition}

% We provide some basic primitives from differential privacy.

% \begin{lemma}[Gaussian mechanism]\label{lem:gaussian_mech}
% Let $f$ be a function, $\epsilon,\delta \in (0, 1)$, and $\sigma^2 \ge \Delta_2f\frac{2\ln(1.25/\delta)}{\epsilon}$. The Gaussian mechanism $f(\mathcal{D})+\mathcal{N}(0, \sigma^2)$ is $(\epsilon,\delta)$-DP. %\mb{Same comment as above. To use the previous definition, $\mathcal{A}$ needs to be deterministic.}
% \end{lemma}

% \begin{lemma}[Basic (sequential) composition]\label{lem:seq_comp}
%     Let $\mathcal{A}_1,\dots \mathcal{A}_k$ be $(\epsilon_i,\delta_i)$-DP algorithms. Then their composition $(\mathcal{A}_1(\mathcal{D}),\dots, \mathcal{A}_k(\mathcal{D}))$ is $(\sum_i \epsilon_i, \sum_i \delta_i)$-DP.
% \end{lemma}

We will state our results in Gaussian DP (GDP) \cite{dong2022gaussian} as our method is based on composition of DP-SGD. In particular, GDP gives the tightest composition for DP-SGD. Informally, GDP can be characterized as a continuum of $(\epsilon,\delta)$-DP guarantees, and we have lossless conversion of GDP to $(\epsilon,\delta)$-DP: 
\begin{lemma}[{\cite[Cor. 2.13]{dong2022gaussian}}]
A mechanism is $\mu$-GDP if and only if it is $(\epsilon,\delta_\mu(\epsilon))$-DP for all $\epsilon > 0$ where \[
\delta_\mu(\epsilon) = \Phi\left(-\frac{\epsilon}{\mu} + \frac{\mu}{2}\right)-e^{\epsilon}\Phi\left(-\frac{\epsilon}{\mu}-\frac{\mu}{2}\right).
\]
\end{lemma}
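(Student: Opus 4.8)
The plan is to establish both directions through the hypothesis-testing and convex-duality viewpoint underlying Gaussian DP. Recall that $\mu$-GDP means precisely that for every pair of adjacent datasets $\mathcal{D}, \mathcal{D}'$, the trade-off function of the pair $(\mathcal{A}(\mathcal{D}), \mathcal{A}(\mathcal{D}'))$ dominates $G_\mu(\alpha) := \Phi(\Phi^{-1}(1-\alpha) - \mu)$, the trade-off function of testing $N(0,1)$ against $N(\mu,1)$. The first ingredient I would use is the standard reformulation of $(\epsilon,\delta)$-DP in terms of the hockey-stick divergence: a mechanism is $(\epsilon,\delta)$-DP if and only if $\sup_S \big[\Pr[\mathcal{A}(\mathcal{D}) \in S] - e^\epsilon \Pr[\mathcal{A}(\mathcal{D}') \in S]\big] \le \delta$ for all adjacent $\mathcal{D}, \mathcal{D}'$, where the supremum is over measurable $S$. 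The second ingredient is that domination of (symmetric, convex) trade-off functions is equivalent to simultaneous domination of all hockey-stick divergences; concretely, the smallest admissible $\delta$ at level $\epsilon$ is $\delta(\epsilon) = 1 + f^*(-e^\epsilon)$, where $f^*$ denotes the convex conjugate of the trade-off function $f$.

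For the forward direction ($\mu$-GDP implies the stated family of $(\epsilon,\delta)$ guarantees), it then suffices to compute the hockey-stick divergence between the two Gaussians $P = N(0,1)$ and $Q = N(\mu,1)$ explicitly. By the Neyman--Pearson lemma the optimal rejection region is a likelihood-ratio threshold $S = \{x : p(x)/q(x) > e^\epsilon\}$; since $p(x)/q(x) = \exp(-\mu x + \mu^2/2)$, this reduces to the half-line $\{x < \mu/2 - \epsilon/\mu\}$. Evaluating $P(S) - e^\epsilon Q(S)$ on this region and simplifying the $\Phi$ arguments yields exactly $\delta_\mu(\epsilon) = \Phi(-\epsilon/\mu + \mu/2) - e^\epsilon \Phi(-\epsilon/\mu - \mu/2)$. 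Because trade-off domination passes to hockey-stick domination, the same bound holds for every adjacent pair, establishing $(\epsilon, \delta_\mu(\epsilon))$-DP for all $\epsilon > 0$.

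For the converse, I would invoke the fact that a closed convex function equals the supremum of its affine minorants. The constraints ``$(\epsilon, \delta_\mu(\epsilon))$-DP for all $\epsilon$'' are equivalent to $f \ge f_{\epsilon, \delta_\mu(\epsilon)}$ for every $\epsilon$, where $f_{\epsilon,\delta}(\alpha) = \max\{0,\, 1 - \delta - e^\epsilon\alpha,\, e^{-\epsilon}(1 - \delta - \alpha)\}$ is the piecewise-linear trade-off function associated with an $(\epsilon,\delta)$ guarantee. Taking the pointwise supremum over $\epsilon$ of these lower bounds recovers the supporting-line envelope of the convex function $G_\mu$, and since $G_\mu$ is itself closed and convex this envelope equals $G_\mu$. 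Hence the full family of guarantees forces the trade-off function to dominate $G_\mu$, i.e.\ $\mu$-GDP.

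The main obstacle is the converse direction: one must verify that the piecewise-linear minorants indexed by $(\epsilon, \delta_\mu(\epsilon))$ are exactly the supporting lines of $G_\mu$ and that their envelope saturates $G_\mu$ at every $\alpha \in [0,1]$, rather than only touching it on a subinterval. This amounts to checking that $\epsilon \mapsto \delta_\mu(\epsilon)$ is precisely the transform $1 + G_\mu^*(-e^\epsilon)$ and that its inverse returns $G_\mu$ without loss---equivalently, that $\delta_\mu$ traces out the entire boundary of the region of achievable $(\epsilon,\delta)$ pairs. The forward hockey-stick computation, by contrast, is a routine Gaussian tail calculation.
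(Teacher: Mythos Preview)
The paper does not prove this lemma at all; it simply quotes it as \cite[Cor.~2.13]{dong2022gaussian}. Your proposal is correct and is essentially the argument given in the cited source: Dong, Roth, and Su first show (their Prop.~2.12) that $f$-DP for a symmetric trade-off function $f$ is equivalent to the family $\{(\epsilon,\delta_f(\epsilon))\text{-DP}\}_{\epsilon\ge 0}$ with $\delta_f(\epsilon)=1+f^*(-e^\epsilon)$, using exactly the convex-conjugate/supporting-line reasoning you outline, and then specialize to $f=G_\mu$ via the Neyman--Pearson computation you describe to obtain the closed form for $\delta_\mu(\epsilon)$.

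One small point worth tightening in your converse: you should also handle $\epsilon=0$ (the statement in the paper says ``for all $\epsilon>0$,'' but the envelope argument needs the full range of slopes $-e^\epsilon\in(-\infty,-1]$ together with the symmetry $f(\alpha)\le 1-\alpha$ and $f^{-1}=f$ to recover $G_\mu$ on all of $[0,1]$; the slopes in $(-1,0]$ come from symmetry rather than from an additional $\epsilon$). This is exactly the ``obstacle'' you flagged, and the resolution in Dong et al.\ is to use symmetry of $G_\mu$ so that only slopes $\le -1$ are needed.
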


A nice property of GDP (and also DP) is the post-processing inequality, which informally says that transforming private output does not incur additional privacy loss. Formally, we have the following:
\begin{lemma}[Post-processing of DP and GDP, {\cite[Prop. 2.1]{dwork2014algorithmic}; \cite[Lem. 2.9]{dong2022gaussian}}]\label{lem:post_processing}
    If a mechanism $\mathcal{A}$ is $(\epsilon, \delta)$-DP (resp. $\mu$-GDP), then for any (possibly randomized algorithm) $g$, $g\circ \mathcal{A}(\mathcal{D})$ is $(\epsilon, \delta)$-DP (resp. $\mu$-GDP). 
\end{lemma}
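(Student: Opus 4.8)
The plan is to prove the $(\epsilon,\delta)$-DP statement directly and then bootstrap the GDP statement from it using the conversion lemma stated above (\cite[Cor.~2.13]{dong2022gaussian}). The guiding principle is that post-processing cannot manufacture new information distinguishing $\mathcal{D}$ from $\mathcal{D}'$: any test applied to the processed output $g(\mathcal{A}(\mathcal{D}))$ pulls back to a test on the raw output $\mathcal{A}(\mathcal{D})$, so the guarantee on $\mathcal{A}$ transfers verbatim.

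First I would handle a \emph{deterministic} map $g$. Fix any measurable $\mathcal{S}\subseteq\operatorname{range}(g\circ\mathcal{A})$ and let $T:=g^{-1}(\mathcal{S})=\{x:g(x)\in\mathcal{S}\}$ be its preimage. Then $\Pr[g(\mathcal{A}(\mathcal{D}))\in\mathcal{S}]=\Pr[\mathcal{A}(\mathcal{D})\in T]$, and applying the $(\epsilon,\delta)$-DP guarantee of $\mathcal{A}$ to the event $T$ yields $\Pr[\mathcal{A}(\mathcal{D})\in T]\le e^\epsilon\Pr[\mathcal{A}(\mathcal{D}')\in T]+\delta=e^\epsilon\Pr[g(\mathcal{A}(\mathcal{D}'))\in\mathcal{S}]+\delta$, which is exactly the DP inequality for $g\circ\mathcal{A}$. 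Next I would reduce the \emph{randomized} case to this one by writing $g(x)=\tilde g(x,r)$ for a deterministic $\tilde g$ and internal randomness $r$ drawn independently of $\mathcal{A}$. Conditioning on $r$, the deterministic bound holds pointwise; integrating over $r$ and using linearity of expectation (the multiplicative $e^\epsilon$ and additive $\delta$ both survive averaging) recovers the inequality for the randomized $g\circ\mathcal{A}$.

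For the GDP claim I would reprove nothing from scratch. By the conversion lemma, $\mu$-GDP of $\mathcal{A}$ is equivalent to $\mathcal{A}$ being $(\epsilon,\delta_\mu(\epsilon))$-DP for every $\epsilon>0$. Applying the $(\epsilon,\delta)$-DP post-processing result just established at each fixed $\epsilon$ shows $g\circ\mathcal{A}$ is $(\epsilon,\delta_\mu(\epsilon))$-DP for all $\epsilon>0$, and the converse direction of the conversion lemma repackages this family of guarantees as $\mu$-GDP of $g\circ\mathcal{A}$. (An alternative, self-contained route for GDP would invoke the data-processing inequality for trade-off functions, $T(g(P),g(Q))\ge T(P,Q)$, applied to $P=\mathcal{A}(\mathcal{D})$ and $Q=\mathcal{A}(\mathcal{D}')$; but the conversion-lemma route is cleanest given what is already available.)

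The only genuinely substantive step is the randomized reduction: justifying the interchange of the expectation over the internal randomness $r$ with the DP inequality hinges on $r$ being independent of $\mathcal{A}(\mathcal{D})$ and $\mathcal{A}(\mathcal{D}')$. Once that independence is in hand, everything else—measurability of the preimage $g^{-1}(\mathcal{S})$, and the fact that $\sup$ over $\mathcal{S}$ in the processed space is controlled by $\sup$ over $T$ in the raw space—is routine bookkeeping, so I expect no real obstacle beyond stating the independence assumption explicitly.
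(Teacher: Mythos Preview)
The paper does not actually prove this lemma; it is stated with citations to \cite[Prop.~2.1]{dwork2014algorithmic} and \cite[Lem.~2.9]{dong2022gaussian} and used as a black box throughout. Your argument is correct and matches the standard preimage-and-average proof given in \cite{dwork2014algorithmic} for the $(\epsilon,\delta)$-DP part. For the GDP part, your conversion-lemma route is valid, though the original proof in \cite{dong2022gaussian} proceeds directly via the data-processing inequality for trade-off functions (the alternative you sketch parenthetically); that route is arguably more natural in the $f$-DP framework since it avoids unpacking GDP into a continuum of $(\epsilon,\delta)$ guarantees and then repacking, but both arrive at the same place with no real difference in difficulty.
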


\subsection{Trajectory inference}
Consider an SDE \begin{equation}
dX_t = -\nabla \Psi(X_t,t)dt + \sqrt{\tau}dB_t,\label{eq:sde}    
\end{equation}
with initial distribution $X_0 \sim \mu_0$, where $\{B_t\}$ is a Brownian motion, $\tau$ is the \emph{known} diffusivity parameter, and $\Psi\in C^2([0,1]\times \mathcal{X})$ is an \emph{unknown} potential function. Let $\mathbf{P}\in\mathcal{P}(\Omega)$ be the law of the SDE with initial condition $\mathbf{P}_0$ where $\mathbf{P}_t \in \mathcal{P}(\mathcal{X})$ are the marginals of $\mathbf{P}$ at time $t \in [0,1]$. Suppose we have $T$ observation times $0 \le t_1 <\cdots < t_T \le 1$, and at each observation time $t_i$, we observe $N_i$ i.i.d. samples from the $X_{t_i}\sim \mathbf{P}_{t_i}$, forming the empirical distributions $\hat{\mu}_i = \frac{1}{N_i}\sum_{j=1}\delta_{X_{t_i}^j}$. To simplify the presentation, we assume $N_i = N$ for each $i\in [T]$ and denote $\Delta t_i := t_{i+1}-t_i$ in the sequel.

The goal of trajectory inference \cite{lavenant2024traj,chizat2022trajectoryinferencemeanfieldlangevin,gu2025partially,yao2025learning} is to recover $\mathbf{P}$ given $(\hat{\mu}_1,\dots,\hat\mu_T)$. 
\cite{lavenant2024traj} formulated the trajectory inference problem as an entropy minimization problem of the functional $\mathcal{F}:\mathcal{P}(\Omega)\to\mathbb{R}$ over path-space defined as 
\begin{equation}
\mathcal{F}(\mathbf{R}) := \mathrm{Fit}^{\lambda,\sigma}(\mathbf{R}_{t_1},\dots, \mathbf{R}_{t_T}) + \tau H(\mathbf{R}|\mathbf{W}^{\tau}).\label{eq:functional_path_space}    
\end{equation}
% Let the observed empirical distribution smoothed by the $h$-wide heat kernel $\Phi_h$ be $\hat{\mu}_i^{h} := \Phi_h\left(\hat{\mu}_i\right)\in\mathcal{P}(\mathcal{X})$ for $i\in [\Bar{T}]$.

We need to introduce a \emph{fit function} to measure the discrepancy between the time marginals of the estimated trajectory distribution $\mathbf{R}$ and the observed samples. To that end, consider the fit function $\mathrm{Fit}^{\lambda, \sigma} :\mathcal{P}(\mathcal{X})^{T+1} \to \mathbb{R}$:
    \begin{equation}
    \mathrm{Fit}^{\lambda, \sigma}(\mathbf{R}_{t_0}, \dots, \mathbf{R}_{t_T}) := \frac{\Delta t_i}{\lambda}\sum_{i=1}^T \mathrm{DF}^\sigma(\mathbf{R}_{t_i},\hat{\mu}_i),        
    \end{equation}
    with data-fitting term introduced by \citep{chizat2022trajectoryinferencemeanfieldlangevin} to be
    \begin{equation}\label{eq:df}
        \begin{split}
    \mathrm{DF}^\sigma(\mathbf{R}_{t_i},\hat{\mu}_i) :={}& \int- \log \left[\int \exp\left(-\frac{\|x - y\|^2}{2\sigma^2}\right)d\mathbf{R}_{t_i}(x)\right]d\hat{\mu}_i(y)\\
    ={}& H(\hat{\mu}_i| \mathbf{R}_{t_i} * \mathcal{N}_\sigma) + H(\hat{\mu}_i) + C,
        \end{split}
    \end{equation}
    where $\mathcal{N}_\sigma$ is the Gaussian kernel with variance $\sigma^2$, $C>0$ is a constant. This data-fitting term can be interpreted as the negative log-likelihood under the noisy observation model $X_{t_i}^j + \sigma Z_{i,j}$, where $X_{t_i}^j$ is the observation and $Z_{i,j}\overset{i.i.d.}{\sim} \mathcal{N}(0,I)$. Observe that $\mathrm{DF}^\sigma$ is jointly convex in $(\mathbf{R}_{t_i}, \hat{\mu}_i)$ and linear in $\hat{\mu}_i$. 

For statistical consistency of trajectory inference, \cite{gu2025partially} extended \cite{lavenant2024traj} and showed that the infinite-time marginal limit of the minimizer of \eqref{eq:functional_path_space} for the data-fit term \eqref{eq:df} converges to the ground truth.\footnote{Their result holds for a more general partially-observed SDE, where the second term on the right-hand side of \eqref{eq:functional_path_space} is entropy with respect to a known divergence-free reference measure. We will not utilize this.}
\begin{theorem}[Consistency, informal]\label{thm:consistency}
Let $\mathbf{R}^{*}\in\mathcal{P}(\Omega)$ be the unique minimizer of \eqref{eq:functional_path_space}. If $\{t_i\}_{t\in [T]}$ becomes dense in $[0, 1]$ as $T\to \infty$, then \[
\lim_{\sigma\to 0,\lambda \to 0}\left(\lim_{T \to \infty} \mathbf{R}^{T,\lambda,h} \right)= \mathbf{P},\]
almost surely.
\end{theorem}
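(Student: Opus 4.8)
The plan is to follow the $\Gamma$-convergence strategy of \cite{lavenant2024traj}, extended in \cite{gu2025partially}, treating the two limits separately: the dense-time limit $T\to\infty$ taken first, and the vanishing-regularization limit $\sigma,\lambda\to0$ taken second. The organizing principle is that the term $\tau H(\mathbf{R}|\mathbf{W}^\tau)$ serves two purposes at once: its sublevel sets are tight, supplying the compactness needed to extract convergent minimizers, and in the limit it selects, among all path measures consistent with the observed marginal flow, exactly the gradient-drift diffusion \eqref{eq:sde}.

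First I would address the inner limit with $\sigma,\lambda$ fixed. The fit term is the space-time Riemann sum $\tfrac{1}{\lambda}\sum_{i=1}^T\Delta t_i\,\mathrm{DF}^\sigma(\mathbf{R}_{t_i},\hat\mu_i)$. Since the samples at distinct observation times are independent and $t\mapsto\mathbf{P}_t$ is continuous, a law of large numbers for the space-time empirical measure $\sum_i\Delta t_i\,\delta_{t_i}\otimes\hat\mu_i$ shows that this sum converges almost surely to $\tfrac{1}{\lambda}\int_0^1\mathrm{DF}^\sigma(\mathbf{R}_t,\mathbf{P}_t)\,dt$; this is the origin of the ``almost surely'' in the statement, and it does not require the per-time sample size $N$ to grow. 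I would then prove $\Gamma$-convergence of $\mathcal F^{T,\lambda,\sigma}$ to $\mathcal F^\infty_{\lambda,\sigma}(\mathbf{R})=\tfrac{1}{\lambda}\int_0^1\mathrm{DF}^\sigma(\mathbf{R}_t,\mathbf{P}_t)\,dt+\tau H(\mathbf{R}|\mathbf{W}^\tau)$: the $\liminf$ inequality uses joint lower semicontinuity of $\mathrm{DF}^\sigma$ and of relative entropy, and a recovery sequence is built by sampling any candidate on the grid. Tightness of the entropy sublevel sets then lets the fundamental theorem of $\Gamma$-convergence deliver $\lim_{T\to\infty}\mathbf{R}^{T,\lambda,\sigma}=\mathbf{R}^\infty_{\lambda,\sigma}$, the unique minimizer of $\mathcal F^\infty_{\lambda,\sigma}$.

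Next I would send $\sigma,\lambda\to0$ in $\mathcal F^\infty_{\lambda,\sigma}$. As $\lambda\to0$ the fit term dominates and forces the deconvolution identity $\mathbf{R}_t*\mathcal N_\sigma=\mathbf{P}_t$ for every $t$ (from \eqref{eq:df}, for fixed data $\mathrm{DF}^\sigma$ is minimized exactly when $\mathbf{R}_t*\mathcal N_\sigma$ equals the data marginal), and as $\sigma\to0$ this collapses to the hard constraint $\mathbf{R}_t=\mathbf{P}_t$ for all $t\in[0,1]$. A second penalization/$\Gamma$-convergence argument, again using entropy for compactness, shows the minimizers converge to the minimizer of $H(\mathbf{R}|\mathbf{W}^\tau)$ subject to $\mathbf{R}_t=\mathbf{P}_t$ for every $t$.

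The decisive step, which I expect to be the main obstacle, is to identify this constrained minimizer with $\mathbf{P}$. The claim is that among all path measures whose entire marginal flow equals $\{\mathbf{P}_t\}_{t\in[0,1]}$, the law $\mathbf{P}$ of the gradient-drift SDE \eqref{eq:sde} is the unique minimizer of $H(\cdot|\mathbf{W}^\tau)$. I would argue via Girsanov's theorem: for any competitor $\mathbf{R}$ with the prescribed marginals and finite entropy, writing its drift as $b_t$, one has $H(\mathbf{R}|\mathbf{W}^\tau)=H(\mathbf{R}_0|\mathbf{W}^\tau_0)+\tfrac{1}{2\tau}\mathbb{E}_{\mathbf{R}}\int_0^1\|b_t\|^2\,dt$. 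Fixing the marginal flow constrains $b_t$ only through the Fokker--Planck continuity equation, i.e. it fixes $\nabla\cdot(\rho_t b_t)$ with $\rho_t$ the density of $\mathbf{P}_t$; by a Helmholtz decomposition in the weighted space $L^2(\rho_t)$, the expected energy is minimized when $b_t$ is a gradient field, and since $-\nabla\Psi(\cdot,t)$ is precisely the gradient reproducing this flow, the minimizer is $b_t=-\nabla\Psi$, whence $\mathbf{R}=\mathbf{P}$. Making this rigorous demands the $C^2$ regularity of $\Psi$, careful specification of the admissible drift class, and uniform control of tightness as $\sigma,\lambda\to0$ so that the order of limits is justified; these technical points are the crux inherited from \cite{lavenant2024traj,gu2025partially}.
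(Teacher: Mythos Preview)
The paper does not supply its own proof of this theorem: it is stated informally and attributed directly to \cite{lavenant2024traj,gu2025partially}, with no argument given beyond the citation. Your sketch is a faithful outline of the $\Gamma$-convergence strategy in those references---first the dense-time limit via a law of large numbers on the space-time empirical measure, then the penalization limit $\sigma,\lambda\to0$ collapsing to the hard marginal constraint, and finally the identification of the constrained minimizer of $H(\cdot\mid\mathbf{W}^\tau)$ with $\mathbf{P}$ through the Girsanov energy and a Helmholtz/gradient-field argument---so there is nothing in the present paper to compare your proposal against beyond confirming that it matches the cited approach.
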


\subsection{Mean-field Langevin dynamics and its discretization}\label{sec:mfld}
In this section, we discuss the application of mean-field Langevin dynamics (MFLD) for trajectory inference, following \cite{chizat2022meanfieldlangevindynamicsexponential,gu2025partially}. We defer some technical details to Appendix \ref{app:mfld}. For a more complete discussion on mean-field Langevin dynamics and its applications, see for instance the discussion in \cite{nitanda2022convex,chizat2022meanfieldlangevindynamicsexponential,gu2025mirrormeanfieldlangevindynamics}. 

Since \eqref{eq:functional_path_space} is an infinite-dimensional optimization problem (on path space), we first need to reduce the problem over particle space $\mathcal{P}(\mathcal{X})^{T}$, where the particles and their couplings serve as an approximation of the trajectory distribution. We start by introducing the entropic-regularized optimal transport plan between $\mu,\nu$, which is defined as 
\begin{equation}
    T_\tau(\mu,\nu) := \min_{\gamma\in\Pi(\mu,\nu)}\int c_\tau(x,y)d\gamma(x,y) + \tau H(\gamma|\mu\otimes \nu) = \min_{\gamma\in\Pi(\mu,\nu)}\tau H(\gamma|p_\tau\mu\otimes \nu),\label{eq:eot}
\end{equation}
where $\Pi(\mu,\nu)$ is the set of transport plans between $\mu$ and $\nu$, $p_t(x,y)$ is the transition probability density of the Brownian motion over the time interval $[0,t]$, and $c_\tau(x,y) := -\tau \log p_\tau(x,y)$. This optimization problem is exactly a Schr\"odinger bridge problem and thus can be solved using entropic optimal transport \cite{peyre2019computational}, e.g., see Appendix \ref{app:eot} for a further discussion.

% \anming{some discussion on connection to Schr\"odinger bridge problem; and use potentials $(\varphi,\psi)$ hereafter}

We now consider the reduction. Define the functional $G:\mathcal{P}(\mathcal{X})^{T}\to \mathbb{R}$ defined for $\boldsymbol\mu=(\boldsymbol{\mu}^{(1)},\dots, \boldsymbol{\mu}^{(T)})$ that represents a family of $T$ constructed temporal marginals by 
\begin{equation}
    G(\boldsymbol{\mu}) := \operatorname{Fit}(\boldsymbol{\mu}) + \sum_{i=`}^{T-1}T_{\tau_i}(\boldsymbol{\mu}^{(i)},\boldsymbol{\mu}^{(i+1}),\label{eq:functionG}
\end{equation}
where $\tau_i := \tau \Delta t_i$. The reduced objective $F: \mathcal{P}(\mathcal{X})^{T}\to\mathbb{R}$ is defined as \begin{equation}
    F(\boldsymbol{\mu}) := G(\boldsymbol{\mu}) + \tau H(\boldsymbol{\mu}),\label{eq:functionF}
\end{equation}
where $H(\boldsymbol{\mu}) = \sum_{i=1}^T \log \boldsymbol{\mu}^{(i)}d\boldsymbol{\mu}^{(i)}$ is the negative differential entropy of the family of measures $\boldsymbol{\mu}$. \cite{chizat2022trajectoryinferencemeanfieldlangevin,gu2025partially} showed the equivalence of minimization of the \eqref{eq:functional_path_space} and \eqref{eq:functionF}. For a full discussion of this equivalence, please refer to Appendix \ref{app:rep_thm}.

The characterization of $F$ described in Appendix \ref{app:properties_of_functional} makes it amenable to utilize the MFLD, as done in \cite{chizat2022trajectoryinferencemeanfieldlangevin,gu2025partially}. In particular, using Proposition \ref{prop:propertiesG_F}, the MFLD is defined as the solution of the following non-linear SDE: \begin{equation}\label{eq:mckean_vlasov}
\begin{cases} 
dX_t^{(i)} = -\nabla V^{(i)}[\boldsymbol{\mu}_t](X_t^{(i)})dt + \sqrt{\tau}dB_t^{(i)}, \quad\quad&\operatorname{Law}(X_0^{(i)}) = \boldsymbol{\mu}_0^{(i)},\\
        \boldsymbol{\mu}^{(i)}_t = \operatorname{Law}(X_t^{(i)}), &i \in [T]
\end{cases}
\end{equation}
It can be straightforwardly checked that the \eqref{eq:mckean_vlasov} is characterized by the following system of Fokker-Planck PDEs: 
\begin{equation}\label{eq:fp_pde}
    \frac{\partial}{\partial t}\boldsymbol{\mu}_t^{(i)} = \nabla \cdot (\nabla V^{(i)}[\boldsymbol{\mu}_t]\boldsymbol{\mu}_t^{(i)}) + \tau \Delta \boldsymbol{\mu}_t^{(i)}
\end{equation}
for $i \in [ T]$. For the exponential convergence of the system, see Appendix \ref{app:mfld}.

The discussion of the MFLD in above is not implementable as an algorithm because it is in the continuous-time and infinite-particle limit. We need to time- and space-discretize the dynamics. To that end, let $m\in\mathbb{N}$ be the number of particles for each time marginal. Define $G_m :(\mathcal{X}^m)^{T}\to \mathbb{R}$ to be the discretized versions of \eqref{eq:functionG}, where we use $G_m(\hat{X}) := G(\hat{\boldsymbol{\mu}}_{\hat{X}})$ and \[
\hat{\boldsymbol{\mu}}_{\hat{X}}^{(i)} := \frac{1}{m} \sum_{j=1}^m \delta_{\hat{X}_j^{(i)}}.
\]
Observing that $m\nabla_{\hat{X}^{(i)}}G_m(\hat{X}) = \nabla V^{(i)}[\hat{\boldsymbol{\mu}}_{\hat{X}}](\hat{X}^{(i)})$ via \cite{chizat2022meanfieldlangevindynamicsexponential}, we obtain the discretization of \eqref{eq:mckean_vlasov}: \begin{equation}\label{eq:discretization}
\begin{cases}
    \hat{X}^{(i)}_{k+1} = \hat{X}^{(i)}_k - \eta \nabla V^{(i)}[\hat{\boldsymbol{\mu}}_k](\hat{X}_k^{(i)}) + \sqrt{\eta \tau} Z_k^{(i)}, \quad\quad& \hat{X}_0^{(i)}\overset{i.i.d.}{\sim} \left(\boldsymbol{\mu}_0^{(i)}\right)^{\otimes m}\\
    \hat{\boldsymbol{\mu}}_k^{(i)} = \hat{\boldsymbol{\mu}}_{\hat{X}_k^{(i)}}, & t\in [ T],
\end{cases}
    \end{equation}
where $\eta > 0$ is the step size and $Z_k^{(i)}\sim \mathcal{N}(0, I_d)^{\otimes m}$ are Gaussian vectors. The MFLD \eqref{eq:mckean_vlasov} is recovered in the limit as $m\to \infty$ and $\eta \to 0$ \cite{chizat2022trajectoryinferencemeanfieldlangevin,nitanda2022convex}. 

We utilize stochastic gradients with Poisson sampling parameter $\rho$ from each of the $T$ distributions\footnote{E.g. for each time marginal, we independently include a data point with probability $\rho$.} to compute a stochastic gradient $\nabla \hat{V}^{(i)}[\hat{\boldsymbol{\mu}}]$ instead of the full-batch gradient. We should think of $\rho = O\left(\frac{1}{N}\right)$, as this allows for privacy amplification.\footnote{We remark that using stochastic gradients also reduces the per-iteration runtime.} For some remarks on the discretization, see Appendix \ref{app:mfld}

\section{Synthetic Trajectory Generation}\label{sec:main}
\subsection{Problem statement}\label{sec:problem_statement}
We leverage the entropic minimization framework and MFLD for private continuous-time synthetic data generation with provable guarantees (Theorems \ref{thm:consistency} and \ref{thm:statistical_convergence}). We start by using the following notion of neighboring datasets.

\begin{definition}[Neighboring datasets]
    Let the dataset be $\mathcal{D}_{N,T} := \{x\mid x\in \cup_{i\in [\Bar{T}]}\operatorname{supp}\hat\mu_i\} \times [T]$ be a collection of data points at each time. We say two datasets $\mathcal{D},\mathcal{D}'$ are neighboring if they differ by exactly one row (in the first marginal).
\end{definition}
When it is clear, we drop the dependence on $N$ and $T$. Recall that we assume that each time point contains the same number of data points, $N$, so the full dataset contains $NT$ data points. In our definition, each person contributes \emph{one} datapoint to \emph{one} temporal marginal. 

Our method straightforwardly extends to the setting where each person contributes a full collection of timepoints via group privacy, along with the setting where each time point may have a different number of supports, but we suppress these details to simplify the discussion. However, we do require that the time marginals of neighboring datasets to match as our privacy analysis utilizes amplification by subsampling and the algorithm requires support on all of the temporal marginals.

\paragraph{Model} Suppose we have a distribution of continuous-time trajectories $\mathbf{P}$. We have a dataset $\mathcal{D}$ that comes from temporal marginals of $\mathbf{P}$ on a fixed time set $\{t_1, \dots, t_T\}$. Our goal is to recover $\mathbf{P}$ from $\mathcal{D}$ under the constraints of DP.

Via the discussion from the previous section, it is natural to adapt the mean-field Langevin dynamics for trajectory inference to this problem. We provide pseudocode for the implementation of \eqref{eq:discretization} (with addition of clipping) in Algorithm \ref{alg:traj_algo}. 

We remark that the fixed time set $\{t_1,\dots, t_T\}$ during training is not restrictive. In practice, we can bin users to the closest time marginal. For instance, in a healthcare example where we want to interpolate information from yearly checkups (e.g. height and weights for populations of male and female children), we can bin the measurements of all $i$th year checkups together even if they don't occur exactly on the $i$th birthday.

Note further that this procedure learns a series of entropic OT plans (one for each EOT cost), which are probabilistic in nature and define a continuous-time process. As a result, sampled synthetic trajectories can be sampled at any collection of time points, these need not match the training grid $\{t_1, \dots, t_T\}$ and can in fact be \emph{continuous time}. See \cite{chizat2022trajectoryinferencemeanfieldlangevin} for further details.

\begin{algorithm}[t]
\begin{algorithmic}[1]
\footnotesize
\Require{Collection of observations $\hat{\boldsymbol{\mu}} := (\hat{\mu}_1,\dots, \hat{\mu}_T)$, collection of $T$ time samples $(t_1,\dots, t_T)$, step size $\eta$, number of particles $m$, number of iterations $K$, Poisson subsample parameter $\rho$, clipping threshold $C$, noise parameter $\tau$}\State{Initialize $m$ particles for each time: $\mathbf{m}_0 := (\hat{m}_1,\dots, \hat{m}_T)\in\mathcal{X}^{m\times T}$}
\For{$K$ iterations}
\For{$i \in [T-1]$}
\State{$\Delta t_i := t_{i+1}-t_i$}
\State{$C_i:=\{C_{j,k}\}_{j,k=1}^m \gets \frac{1}{2}\|\hat{m}_{{i+1},k} - \hat{m}_{i,j}\|^2$}
\State{$\gamma_i \gets \mathrm{Sinkhorn}(\hat{m}_i,\hat{m}_{i+1}, C_i, \tau\cdot \Delta t_i)$} %\Comment{$T_i\in \Pi(\hat{m}_i, \hat{m}_{i+1})$}
\EndFor
\State{Poisson subsample $\Tilde{\boldsymbol{\mu}}:=(\Tilde{\mu}_1,\dots,\Tilde{\mu}_T)$ i.i.d. from $\hat{\boldsymbol{\mu}}$ with parameter $\rho$}
\State{Compute $\{\nabla \hat{V}^{(i)}_{\Tilde{\mu}}\}_{x\in \Tilde{\mu}_i,i\in[T],}$ using $\Tilde{\boldsymbol{\mu}}$ and $\boldsymbol{\gamma}$ via Proposition \ref{prop:propertiesG_F} and clip (item-wise) with threshold $C$}
\State{Update particles for $k \in [m]$, where $\xi_{i,k}\overset{i.i.d.}{\sim} \mathcal{N}(0, C\tau\cdot I)$: \[
\hat{m}_{i+1,k} \gets \hat{m}_{i,k} - \eta \frac{1}{|\Tilde{\mu}_i|}\left(\sum_{x\in \Tilde{\mu}_i}\nabla \hat{V}^{(i)}_x[\hat{\mathbf{m}}_i](m_{i,k}) + \xi_{i,k}\right)
\]
}
\EndFor
\State{\Return particles $\hat{\mathbf{m}}_K$, trajectories $\gamma_{T-1}\circ \cdots \circ \gamma_1$}
\end{algorithmic}
\caption{Continuous-time synthetic trajectory generation}
\label{alg:traj_algo}
\end{algorithm}

\subsection{Privacy guarantees}\label{sec:privacy_guarantees}
Note that the diffusivity $\tau$ in \eqref{eq:discretization} adds Gaussian noise to the gradients, and hence will yield privacy guarantees as \eqref{eq:discretization} is exactly DP-SGD. To bound the sensitivity of gradients, we utilize clipping \cite{Abadi_2016}, where we clip the norm of each gradient by $C$ to bound its sensitivity.\footnote{Observe that based on Proposition \ref{prop:propertiesG_F}, we only need to clip the component based on $\frac{\delta {\operatorname{Fit}}}{\delta\boldsymbol{\mu}^{(i)}}$ as the Schr\"odinger potentials are private (from the previous iteration) by post-processing, Lemma \ref{lem:post_processing}.} Using guarantees of GDP for DP-SGD, parallel composition (Lemma \ref{lem:parallel_composition}) over the time intervals, and post-processing, we have the following privacy guarantee.
\begin{lemma}[Privacy, \cite{bu2020deep}]\label{lem:privacy} Algorithm \ref{alg:traj_algo} with $K$ iterations with clipping threshold $C$, Poisson subsampling with parameter $\rho$, and $\tau >0$ is asymptotically $\mu$-GDP with $\mu = \rho \sqrt{K (e^{1/\tau^2}-1)}$ if $K\gg1$.
\end{lemma}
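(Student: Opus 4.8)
The plan is to reduce Algorithm~\ref{alg:traj_algo} to the standard analysis of DP-SGD within the Gaussian DP framework and then invoke the central limit theorem for GDP of \cite{bu2020deep}. First I would isolate the only data-dependent computation inside each iteration. By Proposition~\ref{prop:propertiesG_F} the gradient $\nabla\hat V^{(i)}$ decomposes into a term coming from $\operatorname{Fit}$, which depends directly on the subsampled data $\tilde\mu_i$, and terms built from the Schr\"odinger potentials encoded in $\gamma_i$. The potentials are computed by Sinkhorn from the particle locations $\hat{\mathbf m}_i$ released at the previous iteration, which are already private; hence by the post-processing inequality (Lemma~\ref{lem:post_processing}) they incur no additional privacy loss, and it suffices to account for the clipped, data-dependent $\operatorname{Fit}$-gradient. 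Clipping each per-example gradient with threshold $C$ bounds the $\ell_2$-sensitivity of the summed gradient $\sum_{x\in\tilde\mu_i}\nabla\hat V^{(i)}_x$ under the add/remove neighboring relation, so that one particle update is exactly a Gaussian mechanism applied to a bounded-sensitivity query, with the injected noise $\xi_{i,k}$ calibrated (consistently with the diffusivity in \eqref{eq:discretization}) to give an effective noise multiplier equal to $\tau$.

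Next I would incorporate Poisson subsampling with rate $\rho$ and compose across the $K$ outer iterations. A single $\rho$-subsampled Gaussian mechanism has a trade-off function dominated by the mixture $\rho\,G_\tau + (1-\rho)\,\mathrm{Id}$, whose privacy is expressed exactly in GDP language. Composing the $K$ steps, I would apply the GDP central limit theorem of \cite{bu2020deep} (building on \cite{dong2022gaussian}): in the regime $K\to\infty$ with $\rho\sqrt{K}$ held at constant order, the $K$-fold composition of $\rho$-subsampled Gaussian mechanisms with noise multiplier $\tau$ converges in trade-off function to that of a single Gaussian, yielding $\mu$-GDP with $\mu=\rho\sqrt{K\,(e^{1/\tau^2}-1)}$. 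This is precisely the asymptotic ($K\gg1$) statement of the lemma.

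It then remains to aggregate across the $T$ temporal marginals. By the definition of neighboring datasets, two adjacent datasets differ in a single row belonging to exactly one marginal $\hat\mu_i$; the updates for distinct indices $i$ act on disjoint data and use independent subsampling draws. Parallel composition (Lemma~\ref{lem:parallel_composition}) therefore shows that the overall mechanism inherits the worst per-marginal guarantee rather than the sum over marginals, so the full procedure is $\mu$-GDP with the same $\mu$. Finally, the returned particles $\hat{\mathbf m}_K$ and the composed transport plans $\gamma_{T-1}\circ\cdots\circ\gamma_1$ are deterministic post-processing of the private iterates, so Lemma~\ref{lem:post_processing} preserves the $\mu$-GDP guarantee for the output.

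The main obstacle I anticipate is the interaction between subsampling amplification and the asymptotic GDP conversion: the clean closed form $\mu=\rho\sqrt{K(e^{1/\tau^2}-1)}$ is only exact in the limit, so the claim must be phrased asymptotically (matching the ``$K\gg1$'' hypothesis) and I would need to confirm that the operative regime is $\rho\sqrt{K}=\Theta(1)$, controlling the Berry--Esseen-type error in the trade-off functions. A secondary subtlety is justifying that clipping only the $\operatorname{Fit}$-component while treating the Sinkhorn potentials as post-processing is legitimate at \emph{every} iteration; this requires an induction over iterations establishing that, at each step, the potentials depend on the sensitive data solely through previously released private particles.
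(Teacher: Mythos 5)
Your proposal follows essentially the same route as the paper, which proves the lemma by observing that Algorithm~\ref{alg:traj_algo} is exactly DP-SGD (clipping only the $\operatorname{Fit}$-component since the Schr\"odinger potentials are post-processing of the previous private iterates), invoking the GDP central limit theorem of \cite{bu2020deep} for $K$ composed $\rho$-subsampled Gaussian mechanisms, and applying parallel composition over the disjoint temporal marginals. Your added attention to the induction over iterations and the $\rho\sqrt{K}=\Theta(1)$ regime makes explicit what the paper leaves implicit, but the argument is the same.
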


\subsection{Utility guarantees}
The recent work \cite{yao2025learning} has provided a statistical rate of convergence for the estimator \eqref{eq:functionF}, and they showed that the estimator recovers the temporal marginals for general stochastic processes that do not necessarily follow SDEs of the form \eqref{eq:sde}. As a result, their result also apply in our setting.

\begin{theorem}[Statistical rate of convergence, informal]\label{thm:statistical_convergence}
Under the setting of Theorems \ref{thm:consistency} and \ref{thm:representer}, with high probability, it holds \[
    \int_0^1 d_H^2(\mathbf{R}_t^* * g_\sigma, \mathbf{P}_t * g_\sigma)dt \lesssim \max\left\{\frac{1}{T}, \frac{1}{N^{2/3}T^{1/3}}\right\}.
    \]
\end{theorem}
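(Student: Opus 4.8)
The plan is to reduce the statement to the statistical rate recently established in \cite{yao2025learning}, after verifying that our estimator and sampling model fall within their framework. First I would check that the reduced objective $F$ in \eqref{eq:functionF} coincides with the functional whose minimizer they analyze, and that the path-space minimizer $\mathbf{R}^*$ of \eqref{eq:functional_path_space} is recovered from the minimizer of $F$ through the representer correspondence (Theorem \ref{thm:representer}); this is precisely the equivalence of \eqref{eq:functional_path_space} and \eqref{eq:functionF} established in \cite{chizat2022trajectoryinferencemeanfieldlangevin,gu2025partially}. I would then confirm that our data model—$N$ i.i.d.\ draws from each of the $T$ marginals $\mathbf{P}_{t_i}$, with $\{t_i\}$ becoming dense as in Theorem \ref{thm:consistency}—matches their assumptions, so that their bound transfers directly to $\int_0^1 d_H^2(\mathbf{R}_t^* * g_\sigma, \mathbf{P}_t * g_\sigma)\,dt$.

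To keep the argument self-contained I would sketch the underlying M-estimation step. The entry point is the optimality inequality $\mathcal{F}(\mathbf{R}^*) \le \mathcal{F}(\mathbf{P})$, which, after inserting the identity $\mathrm{DF}^\sigma(\mathbf{R}_{t_i},\hat\mu_i) = H(\hat\mu_i \mid \mathbf{R}_{t_i} * \mathcal{N}_\sigma) + \text{const}$ from \eqref{eq:df}, rearranges into a bound on the excess data-fit risk in terms of the entropy gap $\tau\,[H(\mathbf{P}\mid\mathbf{W}^\tau) - H(\mathbf{R}^*\mid\mathbf{W}^\tau)]$ and an empirical fluctuation term. A local quadratic expansion of the KL divergence $H(\hat\mu_i\mid\cdot)$ about the smoothed truth $\mathbf{P}_{t_i} * g_\sigma$ then converts this excess risk into the squared Hellinger distance $d_H^2(\mathbf{R}^*_{t_i} * g_\sigma, \mathbf{P}_{t_i} * g_\sigma)$, which is the natural curvature surrogate for the log-likelihood.

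The two terms in the rate arise from two distinct error sources. The $1/T$ term is an approximation error: the regularization $\tau H(\cdot\mid\mathbf{W}^\tau)$ enforces H\"older-$1/2$ regularity of the marginals in time, so replacing the time integral by the average over the grid $\{t_i\}$ and controlling the entropic-OT couplings $T_{\tau_i}$ between consecutive marginals incurs error $O(1/T)$. The $1/(N^{2/3}T^{1/3})$ term is the statistical error of estimating each smoothed marginal from only $N$ samples while pooling information across the $T$ temporally correlated slices; the precise exponents follow from the bias–variance balance carried out in \cite{yao2025learning}, where the favorable dependence comes from convolution with $g_\sigma$ rendering every candidate density highly regular.

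The main obstacle is controlling the empirical-process term uniformly over the class of admissible smoothed marginals $\{\mathbf{R}_t * g_\sigma\}$. This requires a localization (peeling) argument together with a metric-entropy bound for this class; the key leverage, as in \cite{lavenant2024traj,yao2025learning}, is that convolution with $g_\sigma$ makes every candidate density real-analytic with derivatives controlled by $\sigma$, so the effective entropy is small and the fluctuations concentrate at the stated rate. A secondary difficulty is passing from the pointwise-in-time guarantees at the grid $\{t_i\}$ to the continuous integral over $[0,1]$ while respecting the couplings induced by the terms $T_{\tau_i}$; here I would invoke the temporal regularity above to interpolate between grid points.
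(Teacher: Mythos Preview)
Your approach is essentially the same as the paper's: the paper does not prove this theorem but simply imports it from \cite[Thm.~3]{yao2025learning}, noting that the estimator \eqref{eq:functionF} and the sampling model fall within their framework. Your first paragraph captures exactly this reduction; the subsequent sketch of the M-estimation argument, the bias--variance decomposition yielding the two rate terms, and the empirical-process localization goes well beyond what the paper itself provides, which defers entirely to \cite{yao2025learning} for the details.
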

Here $\lesssim$ hides poly-logarithmic factors. See \cite[Thm. 3]{yao2025learning} and the discussion therein for the formal statement. To interpret this result, $d_H^2(p,q) = \int(\sqrt{p}-\sqrt{q})^2dx$  is the squared Hellinger distance between two probability distributions. Recall that $\mathbf{R}_t^*$ is induced minimizer given the marginals and entropic transport plans, and $\mathbf{P}_t^*$ is the true distribution. This result says that the full estimated path-space distribution $t\mapsto \mathbf{R}_t^*$ over $[0, 1]$ converges to the ground-truth distribution  uniformly in time. We briefly remark that the statistical guarantees from \cite{yao2025learning} only hold for the infinite-particle limit and not the finite-particle discretization.

\section{Improvements Upon the Base Method}\label{sec:improved_privacy}
Langevin dynamics-based optimization methods (and thus mean-field Langevin-based) are heavily dependent on good initializations, e.g. initial distributions where $F_\tau(\boldsymbol{\mu}_0)-F_\tau(\boldsymbol{\mu}^*)\le \operatorname{KL}(\boldsymbol{\mu}_0|\boldsymbol{\mu}^*)$ is small, where $\boldsymbol{\mu}^*$ is the minimizer of \eqref{eq:functionF}. Here, the inequality is due to the entropy sandwich inequality \cite{nitanda2022convex,chizat2022meanfieldlangevindynamicsexponential}. For instance, see \cite[Fig. 1]{gu2025partially} for a trajectory inference example where poor initalization causes MFLD fail to optimize. We provide a straightforward method that improves convergence guarantees via warm starts, at the cost of a small increase in privacy budget. Alternatively, we can slightly reduce the privacy budget for the optimization phase by the corresponding amount to maintain the same overall privacy budget.

\subsection{Warm starts via private initializations}

A very straightforward (and computationally cheap) approach is to use private mean or median \cite{haghifam2025private} of the data at each time distribution as a warm start, which is especially useful if the data is unimodal. To that end, we have the following.

\begin{proposition}
Let $\epsilon,\delta \in (0, 1)$. Suppose $\cup_{t\in [T]}\hat{\mu}_i \subseteq B_0(R)$. Let $\Bar{\mu}_i = \frac{1}{N}\sum_{x\in \hat{\mu}_i}x$. Then using initialization of $(\{\delta_{\Bar{\mu}_1 + Y_1}\}_{j=1}^m,\dots, \{\delta_{\Bar{\mu}_T+Y_T}\}_{j=1}^m)$ in line 1 of Algorithm \ref{alg:traj_algo}, where $Y_i \overset{i.i.d.}{\sim}\mathcal{N}\left(0, \frac{8R^2\log(1/\delta)}{\epsilon^2N^2}\right)$ for $i\in[T]$ is $(\epsilon,\delta)$-DP.
\end{proposition}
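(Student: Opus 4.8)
The plan is to view the proposed initialization as a single invocation of the Gaussian mechanism applied to the concatenated vector of empirical temporal means, composed with a deterministic post-processing step. Write $\bar{\boldsymbol{\mu}} := (\bar\mu_1,\dots,\bar\mu_T)$ and let $M(\mathcal{D}) := \bar{\boldsymbol{\mu}} + (Y_1,\dots,Y_T)$ denote the released noisy means. The initialization $(\{\delta_{\bar\mu_i + Y_i}\}_{j=1}^m)_{i\in[T]}$ is obtained from $M(\mathcal{D})$ by placing $m$ identical atoms at each coordinate, which is a deterministic map of $M(\mathcal{D})$ making no further use of the data. Hence, by the post-processing property (Lemma \ref{lem:post_processing}), it suffices to prove that $M$ itself is $(\epsilon,\delta)$-DP.

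The crux is the $\ell_2$-sensitivity of $M$. By the neighboring relation, $\mathcal{D}$ and $\mathcal{D}'$ differ in exactly one row, i.e.\ a single data point is replaced, $x \mapsto x'$, at a single time index $i^{*}$, while all other temporal marginals (and their cardinalities $N$) are preserved. Consequently only the block $\bar\mu_{i^{*}}$ is perturbed, and
\[
\|\bar\mu_{i^{*}} - \bar\mu_{i^{*}}'\| = \frac{1}{N}\|x - x'\| \le \frac{2R}{N},
\]
where the bound uses $x,x' \in B_0(R)$ together with the triangle inequality $\|x-x'\|\le \|x\|+\|x'\|\le 2R$. Since the remaining $T-1$ blocks coincide, the $\ell_2$-sensitivity of the whole vector equals that of this single block, namely $\Delta_2 = 2R/N$. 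I would stress that there is no $\sqrt{T}$ inflation and no composition cost across time: a single-row change touches only one marginal, so the means on the disjoint time-groups behave like a single query for privacy purposes (equivalently, this is parallel composition collapsing to the worst single group).

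With $\Delta_2 = 2R/N$ in hand, I would invoke the Gaussian mechanism: adding isotropic noise $\mathcal{N}(0,\sigma^2 I)$ with $\sigma = \Delta_2\sqrt{2\log(1/\delta)}/\epsilon$ is $(\epsilon,\delta)$-DP, and substituting $\Delta_2 = 2R/N$ yields exactly $\sigma^2 = 8R^2\log(1/\delta)/(\epsilon^2 N^2)$, matching the stated variance. Chaining this with the post-processing reduction of the first paragraph gives the claim. The only step demanding care is the sensitivity bookkeeping of the second paragraph—reading the neighboring definition correctly so that only one mean moves (avoiding a spurious factor of $T$), and tracking that it is the diameter $2R$, not the radius $R$, that enters, which is precisely what produces the constant $8$ in the variance.
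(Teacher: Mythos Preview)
Your proof is correct and matches the paper's own argument. The only cosmetic difference is framing: the paper phrases the time-axis step as parallel composition (Lemma~\ref{lem:parallel_composition}) of $T$ separate Gaussian mechanisms on disjoint marginals, whereas you bundle the $T$ means into one vector and note that a single-row change moves only one block so the global $\ell_2$-sensitivity is still $2R/N$; you yourself flag that these are the same thing, and they are.
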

\begin{proof}
    This follows from a straightforward application of the Gaussian mechanism (Lemma \ref{lem:gaussian_mechanism}), parallel composition (Lemma \ref{lem:parallel_composition}) due to the fact that data at each time marginal are disjoint, and the fact that the sensitivity of $\Bar{\mu}_i$ is $\frac{2R}{N}$.
\end{proof}

If the data is not unimodal, we can utilize private clustering to identify $k$ centers. Then we can use the Gaussian mechanism to privately obtain the counts of the number of points in each cluster, and use a Gaussian mixture with approximate weights as the initialization.

% We remark that another approach that can yield 

% Thus, we can achieve more privacy using fewer total iterations by obtaining a warm start as follows: consider a larger $\tau'$, and run the algorithm for fewer iterations $k'$, both of which contribute to consuming less privacy budget. After convergence is reached, we let the output distribution be our warm start. Then, we can consume more privacy budget by taking $\tau\le \tau'$ (annealing), and running the algorithm for $k\ll k'$ iterations.\footnote{We remark that \cite{suzuki2024feature} considers and analyzes this kind of annealing procedure in the discrete-time setting for mean-field neural networks. Further, other papers consider annealing for non-log-concave sampling.}

% We can also improve the runtime during the warm start as follows. During the first-round, we use a small number of particles $m'$, running the algorithm to convergence in $k'$ iterations and yielding distributions $\mu_0',\dots, \mu_T'$. Then we sample distributions $\mu_0,\dots, \mu_T$ supported on $m\gg m'$ particles uniformly at random from $\mu_0',\dots,\mu_T'$. Now we run this for $k\ll k'$ iterations with smaller $\xi$ (and larger privacy budget). For optimal runtime, we would want $(m')^2k' \approx m^2k$.

\subsection{Subsampled $T$}
When $T$ is large, the method becomes very computationally intensive. One way to alleviate this issue is to subsample the time intervals. In particular, we can take a random subset $\Tilde{T} \subseteq \{2, \dots, T-1\}$ and take the data $( \hat{\mu}_{\Tilde{t}_1}, \dots, \hat{\mu}_{\Tilde{t}_m})$ adjoined with post-processed privatized versions of $\hat{\mu}_1$ and $\hat{\mu}_T$. For instance, to privatize the endpoints, we can use the private initialization as above (as input to the algorithm). Privatizing these endpoints ensures the trajectories are supported over the whole intervals. We can bound the utility degradation from this subsampling via the following result, whose proof we provide in Appendix \ref{app:proofs}.

\begin{proposition}\label{prop:rand_subset}
    Let $\Tilde{T}\subset [T-1]$ be a random subset of size $z = \Omega(\log^2T)$. Then $\mathbb{E}[\max(\Tilde{T}_{i} - \Tilde{T}_{i-1})]
     = O\left(\frac{T}{z}\log z\right)$.
\end{proposition}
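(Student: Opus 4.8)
The plan is to recognize this as a classical maximal-spacing estimate: we are dropping $z$ points (uniformly, without replacement) into the discrete interval $\{1,\dots,T-1\}$ and asking for the expected largest gap between consecutive chosen points. Adjoining the two privatized endpoints by setting $\tilde T_0 := 1$ and $\tilde T_{z+1} := T$, so that the gaps $\tilde T_i - \tilde T_{i-1}$ genuinely partition $[1,T]$, I would study the random variable $M := \max_i (\tilde T_i - \tilde T_{i-1})$ through its upper tail and then recover the expectation via the layer-cake formula $\mathbb{E}[M] = \int_0^T \Pr[M > t]\,dt$ together with the deterministic bound $M \le T$.

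First I would establish the single-window estimate: for any fixed sub-interval $I \subseteq \{1,\dots,T-1\}$ of width $w$, the probability that the sample avoids $I$ entirely is controlled by the without-replacement computation
\[
\frac{\binom{T-1-w}{z}}{\binom{T-1}{z}} = \prod_{i=0}^{z-1}\frac{T-1-w-i}{T-1-i} \le \Bigl(1-\tfrac{w}{T-1}\Bigr)^{z} \le e^{-wz/(T-1)}.
\]
This is the only place the sampling model is used, and the telescoping product bound makes the dependence introduced by sampling without replacement harmless.

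The crucial step, which produces the sharp $\log z$ factor rather than a cruder $\log T$, is the covering argument. Instead of a sliding-window union bound over all $\Theta(T)$ positions, I would partition $[1,T]$ into $\lceil T/g\rceil$ \emph{disjoint} cells of width $g$. If $M > 2g$ then some maximal empty gap has width exceeding $2g$, and any interval of width $2g$ must contain at least one full cell; hence some cell is empty. A union bound over the $\lceil T/g\rceil$ cells gives $\Pr[M > 2g] \le \lceil T/g\rceil\, e^{-gz/(T-1)}$, and substituting $g = t/2$ yields the tail bound $\Pr[M>t] \le \tfrac{2T}{t}\,e^{-tz/(2T)}$.

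Finally I would split $\mathbb{E}[M] = \int_0^T \Pr[M>t]\,dt$ at a threshold $t_0 \asymp \tfrac{T}{z}\log z$. The range $[0,t_0]$ contributes at most $t_0 = O\bigl(\tfrac{T}{z}\log z\bigr)$, while on $[t_0,T]$ the tail bound together with the substitution $u = tz/(2T)$ reduces the integral to $2T\int_{\Omega(\log z)}^{\infty} u^{-1}e^{-u}\,du$, which is $O\bigl(T/(z\log z)\bigr)$ and hence lower order; summing the two pieces gives $\mathbb{E}[M] = O\bigl(\tfrac{T}{z}\log z\bigr)$. The hypothesis $z = \Omega(\log^2 T)$ enters only to ensure that the chosen window width $g$ is admissible at integer scale and that the discarded tail is negligible against the main term, so it does not affect the stated rate. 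I expect the main obstacle to be precisely the covering step: obtaining $\log z$ rather than $\log T$ requires the disjoint-cell partition and the observation that any gap of width $2g$ swallows a whole cell, whereas the naive union bound over all starting positions would lose a factor of $\log T / \log z$.
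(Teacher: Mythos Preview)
Your argument is correct and in fact sharper than the paper's. Both proofs share the hypergeometric single-window estimate $\Pr[B\cap\tilde T=\emptyset]\le e^{-wz/T}$ and the layer-cake split of $\mathbb{E}[M]$ at a threshold $t_0\asymp\tfrac{T}{z}\log z$. The point of departure is the union bound: the paper union-bounds over all $T$ starting positions of a length-$t$ block, obtaining the tail $\Pr[M\ge t]\le T\,e^{-zt/T}$, and then sums the resulting geometric series. Your disjoint-cell covering replaces the prefactor $T$ by $O(T/t)$, and this extra decay is precisely what produces the $\log z$ in the final answer, exactly as you anticipated in your last paragraph. Indeed, with the paper's cruder tail bound the geometric tail evaluates to $\tfrac{T/z}{1-e^{-z/T}}\asymp T^2/z^2$ whenever $z=o(T)$, which is \emph{not} $O(\tfrac{T}{z}\log z)$ under the stated hypothesis $z=\Omega(\log^2 T)$; the paper's concluding sentence ``from which the claim follows so long as $z\ge\log^2 T$'' is an over-claim, and its argument as written only delivers $O(\tfrac{T}{z}\log T)$. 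So your covering step is not merely a cosmetic refinement but is what is actually needed to reach the rate as stated in the proposition.
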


Using this result, we immediately have the following corollary of Theorem \ref{thm:statistical_convergence}.
\begin{corollary}
    Let $|\Tilde{T}| = T^\alpha$ for $\alpha \in (0,1)$. Then with constant probability, it holds \[
    \int_0^1 d_H^2(\mathbf{R}_t^* * g_\sigma, \mathbf{P}_t * g_\sigma)dt \lesssim \max\left
    \{\frac{1}{T^{1-\alpha}}, \frac{1}{N^{2/3}T^{\frac{1-\alpha}{3}}}{}\right\}.\]
\end{corollary}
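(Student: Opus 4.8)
The plan is to combine the spacing bound of Proposition~\ref{prop:rand_subset} with the statistical rate of Theorem~\ref{thm:statistical_convergence}, reading the latter as a guarantee driven by the coarsest temporal resolution of the (now non-uniform) observation grid rather than by the raw number of observation times. The two relevant sources of randomness---the draw of the random subset $\Tilde{T}$ and the draw of the data samples---are independent, so I would establish a constant-probability event for the first and the high-probability event of Theorem~\ref{thm:statistical_convergence} for the second, and then intersect them.

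First I would upgrade Proposition~\ref{prop:rand_subset} from a bound in expectation to one holding with constant probability. Since the maximum gap is nonnegative, Markov's inequality gives, for any fixed $c>1$, that with probability at least $1-1/c$ over the choice of $\Tilde{T}$ the maximum consecutive gap is $O\bigl((T/z)\log z\bigr)$; this is the source of the ``constant probability'' in the statement. Adjoining the two fixed endpoints (whose marginals are supplied by the privatized $\hat\mu_1,\hat\mu_T$) only subdivides the two boundary gaps, so the maximal spacing of the augmented grid $\{1\}\cup\Tilde{T}\cup\{T\}$ is of the same order, and it is this maximal spacing that controls the discretization error.

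Next I would specialize to $z=|\Tilde{T}|=T^{\alpha}$, for which the previous step yields a maximal gap of order $\widetilde{O}(T^{\,1-\alpha})$. Feeding the corresponding effective resolution into Theorem~\ref{thm:statistical_convergence}---whose rate I would write as $\max\{\,r,\ r^{1/3}/N^{2/3}\,\}$ for an effective-resolution scale $r$ normalized so that the uniform grid ($r=1/T$) recovers $\max\{1/T,\,1/(N^{2/3}T^{1/3})\}$---and using that, conditioned on the good-subset event, the theorem holds with high probability over the samples, I would intersect the two events and absorb the polylogarithmic factors into $\lesssim$ to obtain the asserted $\max\{1/T^{1-\alpha},\,1/(N^{2/3}T^{(1-\alpha)/3})\}$.

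The step I expect to be the main obstacle is this last one: certifying that the rate of \cite[Thm.~3]{yao2025learning} is genuinely governed by the worst-resolved subinterval of a non-uniform grid, and reading off the correct power of $T$. For a uniform grid the ``number of observation times'' and the ``inverse maximal spacing'' coincide, but under subsampling they diverge---and one must also track the conversion between the index gap controlled by Proposition~\ref{prop:rand_subset} and the corresponding time gap---so I would revisit the interpolation and concentration steps underlying \cite{yao2025learning} to determine which quantity drives each of the two terms and hence to fix the exponent and the normalization of $r$. A secondary point to verify is that substituting the privatized endpoint marginals for the true $\hat\mu_1,\hat\mu_T$ leaves the interpolation estimate unchanged up to the implicit $\lesssim$ constants, so that privatizing the endpoints does not degrade the rate.
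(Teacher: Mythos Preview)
Your approach is exactly the one the paper takes: it simply states that the corollary follows ``immediately'' from Proposition~\ref{prop:rand_subset} and Theorem~\ref{thm:statistical_convergence}, with no further argument. Your use of Markov's inequality to convert the expectation bound of Proposition~\ref{prop:rand_subset} into a constant-probability statement, and your reading of Theorem~\ref{thm:statistical_convergence} through the effective temporal resolution, is the natural way to unpack that one-line derivation; the caveats you flag (whether the rate in \cite{yao2025learning} is truly governed by the maximal gap rather than the raw count of observation times, and the effect of the privatized endpoints) are not addressed in the paper either.
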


Note that this subsampling procedure will also amplify privacy \cite{balle2018privacyamplificationsubsamplingtight}.

\section{Experiments}\label{sec:experiments}

\begin{figure}
    \centering
    \includegraphics[width=0.49\textwidth]{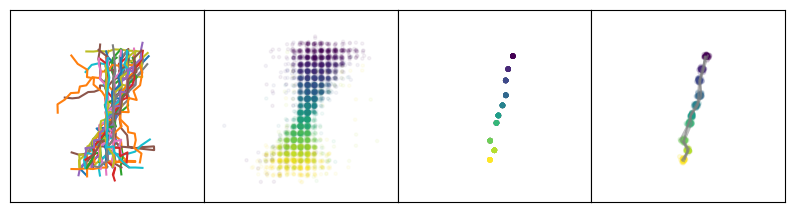}  \includegraphics[width=0.49\textwidth]{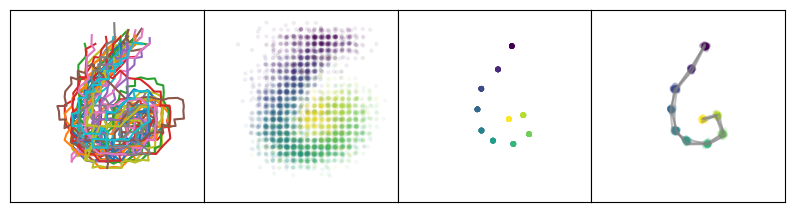}
    \includegraphics[width=0.49\textwidth]{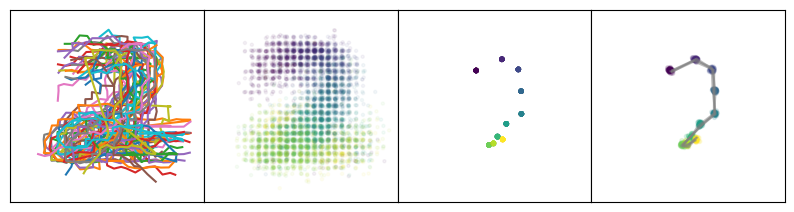}  \includegraphics[width=0.49\textwidth]{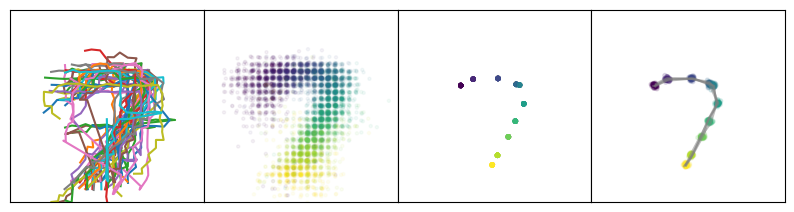}
    \includegraphics[width=0.49\textwidth]{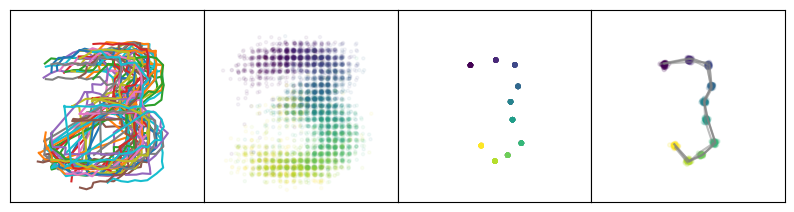}
    \includegraphics[width=0.49\textwidth]{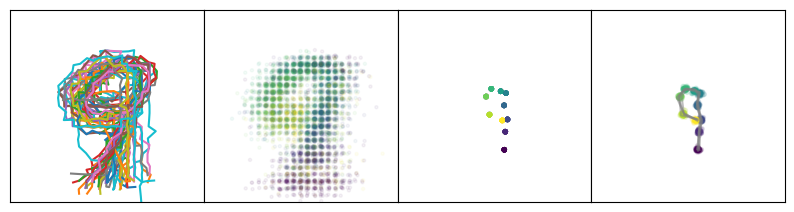}
    \caption{MNIST data using warm starts. Column 1 contains 50 example trajectories; column 2 contains temporal marginals; column 3 contains private means at each temporal marginal; and the last column contains recovered sample paths based on entropic OT plans.}
    \label{fig:warmstart_all}
\end{figure}

% \begin{figure}[t]
%     \centering
%     \centering
%     \begin{subfigure}[t]{0.4\textwidth}
%     \centering
%     \includegraphics[width=0.45\textwidth]{figures/main_true.png}
%     \caption{50 ground truth trajectories.}
%     \label{fig:7_traj}
%     \end{subfigure}~
%     \begin{subfigure}[t]{0.4\textwidth}
%     \centering
%     \includegraphics[width=0.45\textwidth]{figures/main_out.png}
%     \caption{Temporal marginals.}
%     \label{fig:marginals}
%     \end{subfigure}\\
%     \begin{subfigure}[t]{0.9\textwidth}
%     \centering
%     \includegraphics[width=\linewidth]{figures/main.png}
%     \caption{Recovered temporal marginals and trajectories (in grey).}
%     \label{fig:reconstruction}
%     \end{subfigure}
%     \caption{Example reconstruction of MNIST dataset 7 with $T = 10$ and $N = 626$ under $(\epsilon=1,\delta=10^{-3})$-DP.}
%     \label{fig:fig7}
% \end{figure}

We test our experiments on a modification of the MNIST dataset \cite{lecun-mnisthandwrittendigit-2010}. We utilize the MNIST strokes of \url{https://github.com/edwin-de-jong/mnist-digits-as-stroke-sequences} as ``trajectories'' for handwritten data. We provide experiments on digits 1, 2, 3, 6, 7, 9: see Figure \ref{fig:main_all}. We skip digits 4 and 5 because they require ``lifting the pen,'' and thus do not yield continuous paths. We skip digit 8 because a large fraction of the paths in the dataset are corrupted and come out as mirrored half-3's, e.g. see Figure \ref{fig:incorrect8} in Appendix \ref{app:experiments}. In Section \ref{sec:privacy_guarantees}, we state (asymptotic) privacy guarantees in terms of GDP, but we use AutoDP \cite{autodp1,autodp2,autodp3} to convert GDP guarantees to exact $(\epsilon,\delta)$-DP guarantees. We preprocess the data and rescale to the $[0, 1]^2$ grid, yielding trajectories as in the first column of Figure \ref{fig:warmstart_all}. For each handwritten data, we take only one datapoint after binning (see second column of Figure \ref{fig:warmstart_all} . For both of these, we plot the results with a small Gaussian perturbation with variance $0.005$ as MNIST is supported on $[28]^2$.

% We run Algorithm \ref{alg:traj_algo} with $m = 50$ particles, $\eta=0.01$, $K = 250$ iterations, clipping parameter $C = 1$, Poisson subsampling parameter $\rho = \frac{5}{N}$, and $\tau = 1$. Our basic initialization is $\mathcal{N}((0.5, 0.5), 0.2\cdot I_2)$. We plot recovered temporal marginals and sample paths based on entropic OT plans as in Figure \ref{fig:main_all} (right). Observe that the results for all the digits except 2 and 9 are reasonable.

All of our experiments were run on CPU and take no more than a few minutes. We run Algorithm \ref{alg:traj_algo} with warm start initializations in Figure \ref{fig:warmstart_all}, using $\eta=0.001$, $m = 50$, $T = 10$, $C = 1$,  $K = 20$, and $\rho = \frac{20}{N}$. We use $(1,5\cdot 10^{-4})$-DP for both private initialization and optimization, yielding $(2, 10^{-3})$-DP guarantees via Lemma \ref{lem:basic_composition}. We plot recovered temporal marginals and sample paths based on entropic OT plans in the fourth column.

% \begin{figure}
%     \centering
%     \includegraphics[width=0.35\linewidth]{figures/warmstart9.png}
%     \caption{Warm start for digit 9. Left contains the private initialization, and right contains the reconstructed data.}
%     \label{fig:warmstart_main}
% \end{figure}

We provide an experiment of subsampling in Figure \ref{fig:subsampling} (without warm starts). Here, we use $K = 100$. Instead of privatizing the endpoints as described in Section \ref{sec:improved_privacy}, we do a full direct subsampling (of $5$ timepoints out of $10$). This computation uses a budget of $(0.34, 5 \cdot 10^{-4})$-DP due to privacy amplification by subsampling \cite{balle2018privacyamplificationsubsamplingtight}. We provide additional experiments in Appendix \ref{app:experiments} without warm starts. 

% \anming{add an easier multimodal experiment}

\begin{table}
    \centering
    \begin{tabular}{|c||c c c c c c|}
    \hline
        Digit & 1 & 2 & 3 & 6 & 7 & 9 \\
        \hline
        Distance & 0.034 & 0.02 & 0.024 & 0.049 & 0.024 & 0.023 \\
        \hline
    \end{tabular}
    \vspace{0.1cm}
    \caption{Average $W_2$ distance (over time) between marginals of true distributions and recovered distributions from Figure \ref{fig:warmstart_all}.}
    \label{tab:w2_distance}
    % \vspace{-0.5cm}
\end{table}

\begin{figure}
    \centering
    \includegraphics[width=0.9\linewidth]{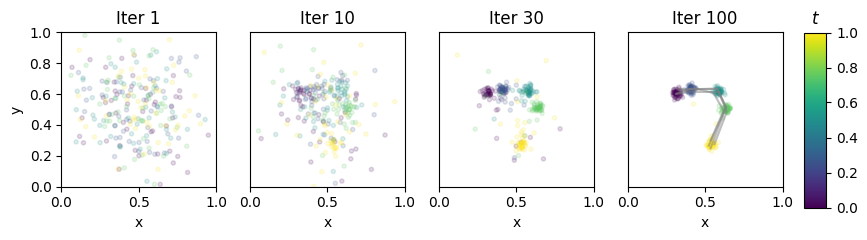}
    \caption{Subsampled 7 with 5 points (out of 10 total corresponding to those of Figure \ref{fig:warmstart_all}), along with plots of particles at various stages of optimization. The total privacy budget used is $(0.34, 5\cdot 10^{-4})$-DP.}
    \label{fig:subsampling}
\end{figure}

We provide a multimodal experiment in Figure \ref{fig:multimodal_main}.
The data is constructed using $ N = 3000$ points for each of $T= 10$ timesteps. One-third each of the data consists of an upper-half semicircle, a straight line, and a lower-half semicircle, and all the data are convolved with a Gaussian $\mathcal{N}(0, 0.005\cdot I_2)$. In the output, we utilize exact optimal transport plans between the marginals to obtain smoother trajectories.\footnote{We find in practice using \emph{entropic} optimal transport plans yields more undesirable ``cross cluster'' trajectories in these multimodal experiments.}

In the annealed experiment, we use uniform random points in $[-1, 1]^2$ as initialization. We run 1000 iterations with $\eta = 0.005$, $\tau = 2.5$, followed by 500 iterations with $\eta = 0.003$ with $\tau = 1.5$, and 500 iterations with $\eta = 0.001$ and $\tau = 1$, using a total of $(\epsilon = 1.84, \delta = 0.01)$-DP. Here, each regime utilizes the same $\delta' = 0.01/3$ (yielding $\delta =0.01$ by Lemma \ref{lem:basic_composition}). 

For the warm start via private initialization, we set a total privacy budget of $\epsilon = 1$ and $\delta = 0.01$. We utilize the implementation \href{https://github.com/UzL-PrivSec/dp-kmeans-max-cove}{https://github.com/UzL-PrivSec/dp-kmeans-max-cover} of \cite{nguyen2021differentially} to compute private $3$-means for each time period under $(\epsilon/3, \delta/3)$-DP, which is in total $(\epsilon/3,\delta/3)$-DP by Lemma \ref{lem:parallel_composition}. Next, to obtain a proxy counts of the data, we use the Gaussian mechanism (Lemma \ref{lem:gaussian_mechanism}) to approximate the counts of number of datapoints matched to each cluster. Each of the counts uses a budget of $\epsilon/9$ and $\delta/9$ so composing over the 3 clusters is $(\epsilon/3,\delta/3)$-DP (uniformly in time). We use these means and approximate counts for initialization as warm starts, which doesn't consume additional privacy budget due to post-processing (Lemma \ref{lem:post_processing}). Finally, we run the algorithm using 50 iterations with $\eta = 0.1$ and $\tau = 2$. 

We observe that using warm starts via private initialization yields more accurate weights between clusters, e.g. the annealing experiment results in optimization finding a local minimum.

% some additional notes: partial DP can be adapted to the partially observed setting, e.g. see https://arxiv.org/pdf/2209.04053

% can add this to the appendix

\begin{figure}[t]
    \centering
    \begin{subfigure}[t]{0.3\textwidth}
    \centering
    \includegraphics[width=0.7\textwidth]{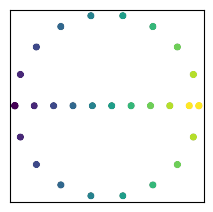}
    \caption{Ground truth.}
    \end{subfigure}\hspace{0.2cm}
    \begin{subfigure}[t]{0.3\textwidth}
    \centering
    \includegraphics[width=0.7\textwidth]{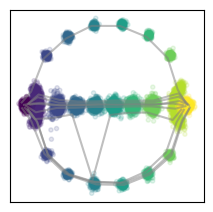}
    \caption{Annealing with $(1.84, 10^{-2})$-DP.}
    \end{subfigure}\hspace{0.2cm}
    \begin{subfigure}[t]{0.3\textwidth}
    \centering
    \includegraphics[width=0.7\textwidth]{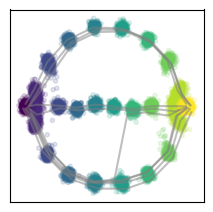}
    \caption{Warm starts with $(1, 10^{-2})$-DP.}
    \end{subfigure}
    \caption{Multimodal experiments. We plot 10 trajectories each.}
    \label{fig:multimodal_main}
\end{figure}

\section{Conclusion}\label{sec:conclusion}
In this work, we initiate the study of private generation of continuous-time synthetic data under the setting where each user contributes one data point and not their entire trajectory, leveraging the framework of \cite{lavenant2024traj,chizat2022trajectoryinferencemeanfieldlangevin,gu2025partially,yao2025learning}. Experimental validation showed the effectiveness of our method in both ``handwritten'' data and multimodal synthetic data. For future work, it would be interesting to adapt neural network approaches for trajectory inference to this problem. We also note that our method is in the static setting, where the data curator has access to all the data at once. It would be interesting to study private synthetic data generation under the online \cite{jain2012differentially} or continual release \cite{dwork2010continual_release,chan2011continual_release} settings, where data comes as part of a stream. We also leave for future work incorporating the sliced-Wasserstein method of \cite{donhauser2024privacy} to allow our method to work with discrete-valued (e.g. Boolean) data. A limitation of our work is that our method is not adaptive to data living on a low-dimensional subspace (e.g. the manifold hypothesis) as our method constructs high-dimensional Gaussians. Additionally, our method requires convexity of the space, which may not always hold, e.g. if we want to synthesize trajectories on road networks. We leave these for future work.

% remark that the data is not continual release setting; although this is still interesting (how to do this in the continual release setting)

\bibliography{references}

\appendix
\section{Additional results on differential privacy}\label{app:dp}
We include some additional results from the main text on differential privacy in this section.
\begin{definition}[$\ell_p$-sensitivity]
We define the $\ell_p$-sensitivity of a function $f$ to be \[
\Delta_p f:= \max_{\mathcal{D}, \mathcal{D}'}\|f(\mathcal{D}) - f(\mathcal{D}')\|_p,
\]
where $\mathcal{D}, \mathcal{D}'$ are adjacent datasets. %\mb{Don't use the same symbol $\mathcal{A}$ here as you do for an algorithm in the previous definition; in this definition, $\mathcal{A}$ stands for a deterministic function rather than a randomized algorithm obtained by adding noise to a deterministic function.}
\end{definition}
\begin{lemma}[Gaussian mechanism]\label{lem:gaussian_mechanism}
    Let $f$ be a function, $\epsilon,\delta \in (0, 1)$, and $\sigma^2 \ge \Delta_2f\frac{2\ln(1.25/\delta)}{\epsilon}$. The Gaussian mechanism $f(\mathcal{D})+\mathcal{N}(0, \sigma^2)$ is $(\epsilon,\delta)$-DP.
\end{lemma}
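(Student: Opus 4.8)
The plan is to prove the classical Gaussian mechanism bound (as in Dwork--Roth) by reducing the multidimensional guarantee to a one-dimensional tail estimate on the privacy-loss random variable. First I would exploit the rotational invariance of the isotropic noise $\mathcal{N}(0,\sigma^2 I)$: fixing adjacent datasets $\mathcal{D},\mathcal{D}'$ and writing $v := f(\mathcal{D}) - f(\mathcal{D}')$ with $\|v\|_2 \le \Delta_2 f$, I would rotate coordinates so that $v$ is aligned with the first axis. Since the Gaussian density factorizes across coordinates and the two output distributions share identical means in every direction orthogonal to $v$, those coordinates cancel in the log-density ratio, so the whole analysis collapses to a scalar problem with mean-gap $b := \|v\|_2 \le \Delta_2 f$.

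Next I would compute the privacy-loss random variable explicitly. Writing the two scalar laws as $p_1 = \mathcal{N}(\mu_1,\sigma^2)$ and $p_2 = \mathcal{N}(\mu_2,\sigma^2)$ with $b = |\mu_1 - \mu_2|$, and drawing the output as $x = \mu_1 + \sigma Z$ with $Z \sim \mathcal{N}(0,1)$, a direct expansion of the quadratic exponents gives $L := \ln\frac{p_1(x)}{p_2(x)} = \frac{b^2}{2\sigma^2} + \frac{b}{\sigma}Z$, so that $L \sim \mathcal{N}\!\left(\frac{b^2}{2\sigma^2},\,\frac{b^2}{\sigma^2}\right)$. I would then invoke the standard sufficient condition for approximate DP: if $\Pr_{x\sim p_1}[L(x) > \epsilon] \le \delta$ for all adjacent pairs, then the mechanism is $(\epsilon,\delta)$-DP. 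This reduction follows from a set-splitting argument: letting $S := \{x : L(x) > \epsilon\}$, for any measurable event $\mathcal{S}$ one bounds $\Pr_{x\sim p_1}[x \in \mathcal{S}] \le \Pr_{x\sim p_1}[x \in \mathcal{S}\setminus S] + \Pr_{x\sim p_1}[x \in S] \le e^{\epsilon}\Pr_{x\sim p_2}[x\in \mathcal{S}] + \delta$, using that $L(x) \le \epsilon$ off $S$ to control the first term.

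Finally I would control the tail. Standardizing, the event $\{L > \epsilon\}$ becomes $\{Z > t\}$ with $t = \frac{\sigma\epsilon}{b} - \frac{b}{2\sigma}$; since $\partial_b t = -\frac{\sigma\epsilon}{b^2} - \frac{1}{2\sigma} < 0$, the threshold $t$ is strictly decreasing in $b$, so $\Pr[Z > t]$ is largest at the worst-case shift $b = \Delta_2 f$, which I would therefore substitute. Applying the Gaussian tail bound $\Pr[Z > t] \le \frac{1}{\sqrt{2\pi}\,t}\,e^{-t^2/2}$ (valid once $t>0$, which holds in the relevant regime because the hypothesis forces $\sigma$ large relative to $\Delta_2 f/\epsilon$) and plugging in the hypothesized lower bound on $\sigma$ in its dimensionally consistent form $\sigma^2 \ge \frac{2\ln(1.25/\delta)(\Delta_2 f)^2}{\epsilon^2}$, I would verify the tail probability is at most $\delta$. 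The main obstacle is precisely this last estimate: pinning down the specific constant $1.25$ requires the careful Dwork--Roth bookkeeping of the two competing terms inside $t^2/2$ together with the assumption $\epsilon,\delta \in (0,1)$, and is a matter of calculation rather than any conceptual difficulty.
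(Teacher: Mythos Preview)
The paper does not actually supply a proof of this lemma; it is listed in Appendix~A among standard background facts, stated without argument (the classical Gaussian mechanism from Dwork--Roth~\cite{dwork2014algorithmic}). Your proposal is precisely the standard Dwork--Roth proof---reduce to one dimension by rotational invariance, compute the privacy-loss variable, and bound the Gaussian tail---and is correct; you also rightly noted and silently fixed the dimensional typo in the statement (the intended hypothesis is $\sigma^2 \ge 2\ln(1.25/\delta)\,(\Delta_2 f)^2/\epsilon^2$, not $\Delta_2 f \cdot 2\ln(1.25/\delta)/\epsilon$).
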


\begin{lemma}[Basic composition]\label{lem:basic_composition}
        Let $\mathcal{A}_1,\dots \mathcal{A}_k$ be $(\epsilon_i,\delta_i)$-DP algorithms. Then their composition $(\mathcal{A}_1(\mathcal{D}),\dots, \mathcal{A}_k(\mathcal{D}))$ is $(\sum_i \epsilon_i, \sum_i \delta_i)$-DP.
\end{lemma}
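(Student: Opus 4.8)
The plan is to prove the lemma by induction on the number $k$ of composed mechanisms, reducing everything to the two-fold case, and to obtain the \emph{exact} additive loss $\sum_i\delta_i$ (rather than a loss carrying spurious exponential factors) through the mixture characterization of approximate differential privacy. Throughout I fix an adjacent pair $\mathcal{D},\mathcal{D}'$; since the mechanisms use independent internal randomness and each is applied to the same dataset, the law of $(\mathcal{A}_1(\mathcal{D}),\dots,\mathcal{A}_k(\mathcal{D}))$ is the product of the individual output laws, and likewise for $\mathcal{D}'$. Writing $P_i,Q_i$ for the output laws of $\mathcal{A}_i$ on $\mathcal{D},\mathcal{D}'$, the goal is to control the product laws $\bigotimes_i P_i$ and $\bigotimes_i Q_i$ at privacy level $\sum_i\epsilon_i$.

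The key tool is the decomposition characterization: a pair $(P,Q)$ is $(\epsilon,\delta)$-indistinguishable if and only if it can be written as $P=(1-\delta)P_0+\delta P_1$ and $Q=(1-\delta)Q_0+\delta Q_1$, where $(P_0,Q_0)$ satisfies the pure bound $P_0(S)\le e^\epsilon Q_0(S)$ for every $S$ (and symmetrically), and $P_1,Q_1$ are arbitrary. First I would record the easy (``if'') direction: given such a decomposition, $P(S)=(1-\delta)P_0(S)+\delta P_1(S)\le e^\epsilon(1-\delta)Q_0(S)+\delta\le e^\epsilon Q(S)+\delta$, which is exactly $(\epsilon,\delta)$-DP. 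The pure case $\delta_i=0$ of the lemma is then immediate, since the pointwise density ratio bounds multiply: $\bigotimes_i P_i\le e^{\sum_i\epsilon_i}\bigotimes_i Q_i$.

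For the base case $k=2$ with $\delta_i>0$, I would apply the (``only if'') decomposition to each mechanism, obtaining $P_i=(1-\delta_i)P_{i,0}+\delta_i P_{i,1}$ with $(P_{i,0},Q_{i,0})$ pure-$\epsilon_i$-indistinguishable. Expanding $P_1\otimes P_2$ into its four mixture components, the ``all-good'' component $P_{1,0}\otimes P_{2,0}$ carries weight $(1-\delta_1)(1-\delta_2)$ and satisfies the pointwise bound $P_{1,0}\otimes P_{2,0}\le e^{\epsilon_1+\epsilon_2}\,Q_{1,0}\otimes Q_{2,0}$, while the remaining three components carry total weight $1-(1-\delta_1)(1-\delta_2)\le\delta_1+\delta_2$. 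Thus $P_1\otimes P_2$ is a mixture of a pure-$(\epsilon_1+\epsilon_2)$-indistinguishable part (against the all-good part of $Q_1\otimes Q_2$) and a remainder of weight at most $\delta_1+\delta_2$, and the easy direction yields $(\epsilon_1+\epsilon_2,\delta_1+\delta_2)$-DP. The induction then proceeds by composing $(\mathcal{A}_1,\dots,\mathcal{A}_{k-1})$---which is $(\sum_{i<k}\epsilon_i,\sum_{i<k}\delta_i)$-DP by hypothesis---with $\mathcal{A}_k$; equivalently one applies the mixture argument to all $k$ factors at once and uses $1-\prod_i(1-\delta_i)\le\sum_i\delta_i$.

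I expect the main obstacle to be the ``only if'' direction of the decomposition characterization, i.e.\ showing that $(\epsilon,\delta)$-DP actually produces a mixture with failure weight exactly $\delta$. This is what removes the spurious factor a naive argument introduces: bounding the product hockey-stick divergence directly through $(a+b)_+\le a_++b_+$ yields only $\delta_1+e^{\epsilon_1}\delta_2$, and it is precisely the mixture decomposition that charges the overflow $P_i(B_i)-e^{\epsilon_i}Q_i(B_i)$ on the bad set $B_i=\{dP_i/dQ_i>e^{\epsilon_i}\}$ to the pure component rather than to the $\delta$ budget. Constructing the decomposition explicitly---shaving the excess density $(\,dP_i-e^{\epsilon_i}dQ_i)_+$ off $P_i$, and symmetrically off $Q_i$, into the $\delta_i$-weighted components while verifying the two-sided pure bound on what remains---is the one genuinely delicate step; everything else is bookkeeping.
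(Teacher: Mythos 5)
The paper does not prove this lemma at all: it is stated in Appendix~\ref{app:dp} as a standard imported fact (it is the basic composition theorem of the DP literature, e.g.\ Theorem~3.16 of Dwork--Roth, originally due to Dwork--Rothblum--Vadhan), so there is no in-paper argument to compare against. Your blind proof is a correct, self-contained reconstruction of the standard argument. In particular, you correctly identify the one genuine subtlety: the naive conditioning/hockey-stick computation, $\bigl(P_1\otimes P_2\bigr)(S)\le e^{\epsilon_2}\bigl(e^{\epsilon_1}(Q_1\otimes Q_2)(S)+\delta_1\bigr)+\delta_2$, only yields $e^{\epsilon_2}\delta_1+\delta_2$, and the clean additive loss $\sum_i\delta_i$ really does require the two-sided mixture characterization of $(\epsilon,\delta)$-indistinguishability (this is Lemma~3.2 of Kairouz--Oh--Viswanath, and is exactly the route the textbook proof takes). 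Your sketch of its construction is also sound: with $p_0'=\min(dP,e^{\epsilon}dQ)$ and $q_0'=\min(dQ,e^{\epsilon}dP)$ one checks pointwise that $e^{-\epsilon}q_0'\le p_0'\le e^{\epsilon}q_0'$, and the shaved masses $\int(dP-e^{\epsilon}dQ)_+$ and $\int(dQ-e^{\epsilon}dP)_+$ are each at most $\delta$ by the DP hypothesis; the only bookkeeping you gloss over is that these two excess masses need not equal each other or equal $\delta$, so to get a common mixture weight $1-\delta$ on both sides one must pad the bad components with additional (arbitrarily chosen) mass rather than renormalize each side independently, since independent rescaling would distort the pure ratio bound. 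Two further points are handled correctly and are worth noting: the lemma as stated is non-adaptive (all $\mathcal{A}_i$ run on the same $\mathcal{D}$ with independent internal randomness), so your product-measure reduction is legitimate; and the pure parts compose for \emph{all} measurable sets, not just rectangles, because the ratio bound $d(P_{1,0}\otimes P_{2,0})\le e^{\epsilon_1+\epsilon_2}\,d(Q_{1,0}\otimes Q_{2,0})$ holds pointwise. So the proposal is correct modulo the explicitly flagged (and known-true) decomposition lemma, and is in fact more informative than the paper, which cites the result without proof.
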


\begin{lemma}[Parallel composition]\label{lem:parallel_composition}
    Let $\mathcal{A}_1,\dots, \mathcal{A}_k$ be $(\epsilon,\delta)$-DP algorithms. Suppose $\mathcal{D} = S_1\cup \cdots \cup S_k$, where $S_i \cap S_j = \emptyset$ for every $i \neq j$. Then $(\mathcal{A}_1(S_1),\dots, \mathcal{A}_k(S_k))$ is $(\epsilon,\delta)$-DP.      
\end{lemma}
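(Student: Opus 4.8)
The plan is to exploit the disjointness of the partition so that changing a single element of the dataset perturbs the input of only \emph{one} of the mechanisms, and then invoke the $(\epsilon,\delta)$-DP guarantee of that single mechanism. First I would fix two neighboring datasets $\mathcal{D}$ and $\mathcal{D}'$ differing in exactly one element. Writing $\mathcal{D} = S_1 \cup \cdots \cup S_k$ and $\mathcal{D}' = S_1' \cup \cdots \cup S_k'$ for the corresponding partitions, the differing element lies in exactly one block, say block $j$; hence $S_i = S_i'$ for all $i \neq j$, while $S_j$ and $S_j'$ are themselves neighboring.

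Next I would use the fact that the mechanisms act on disjoint data and are run independently, so the joint output $(\mathcal{A}_1(S_1),\dots,\mathcal{A}_k(S_k))$ is a product of independent random variables. I would group these into $Y := (\mathcal{A}_i(S_i))_{i\neq j}$, whose distribution is identical under $\mathcal{D}$ and $\mathcal{D}'$ since the inputs $\{S_i\}_{i\neq j}$ are unchanged, together with $W := \mathcal{A}_j(S_j)$ (resp.\ $W' := \mathcal{A}_j(S_j')$). For any measurable event $\mathcal{S}$ in the joint output space, I would decompose by conditioning on $Y$: writing $\mathcal{S}_y := \{w : (y,w)\in\mathcal{S}\}$ for the slice of $\mathcal{S}$ at $y$ and using independence of $Y$ and $W$,
\[
\Pr[(Y,W)\in\mathcal{S}] = \mathbb{E}_Y\big[\Pr[W\in\mathcal{S}_Y]\big].
\]
Applying the $(\epsilon,\delta)$-DP guarantee of $\mathcal{A}_j$ on the neighboring pair $(S_j, S_j')$ to each slice gives $\Pr[W\in\mathcal{S}_Y]\le e^\epsilon\Pr[W'\in\mathcal{S}_Y]+\delta$ pointwise in $Y$. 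Taking expectation over $Y$ (whose law is the same under both datasets) then yields $\Pr[(Y,W)\in\mathcal{S}]\le e^\epsilon\Pr[(Y,W')\in\mathcal{S}]+\delta$, which is exactly the $(\epsilon,\delta)$-DP inequality for the composed mechanism.

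The crux is that $\delta$ does \emph{not} accumulate across the $k$ blocks: because only one input block changes between neighbors, we invoke a single mechanism's privacy guarantee rather than composing $k$ of them, so the additive slack remains $\delta$ rather than $k\delta$. The main technical care needed is the measure-theoretic slicing step, namely ensuring $y\mapsto \Pr[W\in\mathcal{S}_y]$ is measurable so that Fubini applies when integrating the pointwise DP bound against the law of $Y$, but this is routine for the product $\sigma$-algebra on the joint output space. I expect no genuine obstacle beyond the bookkeeping of which block is perturbed.
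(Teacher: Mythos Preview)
Your argument is correct and is the standard proof of parallel composition: the key observation that only one block $S_j$ changes between neighboring datasets, so only $\mathcal{A}_j$'s guarantee is invoked, is exactly right, and your conditioning-on-$Y$ decomposition cleanly handles the $(\epsilon,\delta)$ case without accumulating $\delta$. The paper itself does not supply a proof of this lemma; it is stated in Appendix~\ref{app:dp} as a known background result alongside the Gaussian mechanism and basic composition, so there is no paper proof to compare against beyond noting that your argument is the expected one.
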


\section{Entropic optimal transport}\label{app:eot}

\begin{algorithm}[t]
\begin{algorithmic}[1]
\footnotesize
\Require{Empirical probability measures $\mu = \frac{1}{n}\sum \delta_{X_i},\nu = \frac{1}{m}\sum\delta_{Y_j}$, cost matrix $C\in \mathbb{R}^{n\times m}$, regularization parameter $\epsilon$, number of iterations $N$}
\State{$\varphi^{(0)} \gets \mathbf{1}_n$}
\State{$K \gets \exp(-C/\epsilon)$}
\For{$i = 1, \dots, N$}
\State{$\psi^{(i)}\gets \nu \odot K^\top \varphi^{(i-1)}$}
\State{$\varphi^{(i)}\gets \mu \odot K \psi^{(i-1)}$}

\EndFor
\State{\Return entropic optimal transport plan $\operatorname{diag}(\varphi^{(N)})K \operatorname{diag}(\psi^{(N)})$}
\end{algorithmic}
\caption{Sinkhorn}
\label{alg:sinkhorn}
\end{algorithm}

We provide a brief exposition to entropic optimal transport. See \citep{peyre2019computational} for a more thorough introduction. Let $\mathcal{X},\mathcal{Y}$ be arbitrary Polish spaces, $c: \mathcal{X}\times\mathcal{Y}\to\mathbb{R}$ be a cost function, and $\mu,\nu$ be probability measures on $\mathcal{X},\mathcal{Y}$, respectively. The entropic OT problem is
\[
T_\epsilon(\mu,\nu) =
\inf_{\pi\in \Pi(\mu,\nu)}\int_{\mathcal{X}\times\mathcal{Y}}c(x,y)\pi(dx,dy) + \epsilon H(\pi|\mu\otimes \nu),
\]
where $\Pi(\mu,\nu)$ is the set of all probability measures on $\mathcal{X}\times\mathcal{Y}$ with marginals $\mu$ on $\mathcal{X}$ and $\nu$ on $\mathcal{Y}$, $H$ is the relative entropy, and $\epsilon$ is the regularization parameter. By duality theory, this is equivalent to the following problem \[
T_\epsilon(\mu,\nu) = \max_{\varphi\in L^1(\mu), \psi\in L^1(\nu)} \int\varphi d\mu + \int\psi d\nu + \epsilon\left(1 - \int e^{\frac{1}{\epsilon}(\varphi(x) + \psi(y) - c(x,y))}d\mu(x)d\mu(y)\right),
\]
which admits a unique solution up to a translation $(\varphi + \kappa,\psi - \kappa)$ for $\kappa\in \mathbb{R}$. Furthermore, the functions $(\varphi,\psi)$ satisfy the following conditions: \[
\begin{cases}
    \varphi(x) = -\epsilon \log \int \exp(\frac{1}{\epsilon}(\varphi(y) - c(x,y)))d\nu(y)\\
    \psi(y) = -\epsilon \log \int \exp(\frac{1}{\epsilon}(\varphi(y) - c(x,y)))d\mu(x).
\end{cases}
\]
In the discrete (empirical measure) setting, these potentials give motivation for the Sinkhorn algorithm, which we describe in Algorithm \ref{alg:sinkhorn}, where $\odot$ is taken to be element-wise multiplication. 

Remarkably, the entropic optimal transport problem has been shown to be equivalent to the Schr\"dinger bridge problem \cite{léonard2010schrodinger,léonard2013survey}, which inspires the trajectory inference algorithms from \cite{chizat2022trajectoryinferencemeanfieldlangevin,gu2025partially}.

\section{Additional results on mean-field Langevin dynamics}\label{app:mfld}

\subsection{Representer theorem}\label{app:rep_thm}
See \cite{chizat2022trajectoryinferencemeanfieldlangevin} for the proof of the following.
\begin{theorem}[Representer theorem]
    Let $\mathrm{Fit}:\mathcal{P}(\mathcal{X})^{T}\to \mathbb{R}$ be \emph{any} function. The following hold.
    \begin{enumerate}
        \item[(i)] If $\mathcal{F}$ admits a minimizer $\mathbf{R}^*$ then $(\mathbf{R}_{t_1}^*, \dots, \mathbf{R}_{t_T}^*)$ is a minimizer for $F$. 
        \item[(ii)] If $F$ admits a minimizer $\boldsymbol{\mu}^*\in\mathcal{P}(\mathcal{X})^{T}$, then a minimizer $\mathbf{R}^*$ for $\mathcal{F}$ is built as \[
        \mathbf{R}^*(\cdot) = \int_{\mathcal{X}^{T}}\mathbf{W}^{\tau}(\cdot|x_1,\dots, x_T)\,d\mathbf{R}_{t_1,\dots, t_T}(x_1,\dots, x_T),
        \]
        where $\mathbf{W}^{\tau}(\cdot|x_1,\dots, x_T)$ is the law of $\mathbf{W}^{\tau}$ conditioned on passing through $x_1,\dots, x_T$ at times $t_1,\dots, t_T$, respectively  and $\mathbf{R}_{t_1,\dots, t_T}$ is the composition of the optimal transport plans $\gamma_{i}$ that minimize $T_{\tau'}(\boldsymbol{\mu}^{*(i)}, \boldsymbol{\mu}^{*(i+1)})$, for $i \in [T]$.
    \end{enumerate}

    \label{thm:representer}
 \end{theorem}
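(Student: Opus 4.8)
The plan is to establish the equality of minimal values $\min_{\mathbf{R}}\mathcal{F}(\mathbf{R}) = \min_{\boldsymbol{\mu}}F(\boldsymbol{\mu})$ together with an explicit correspondence between minimizers, from which both (i) and (ii) follow at once. The central tool is the chain rule (disintegration) for relative entropy, applied twice: once to separate the conditional ``bridge'' part of a path measure, and once to exploit the Markov factorization of the reference Wiener measure.

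\textbf{Step 1 (projection inequality).} First I would show $\mathcal{F}(\mathbf{R})\ge F(\mathbf{R}_{t_1},\dots,\mathbf{R}_{t_T})$ for every $\mathbf{R}\in\mathcal{P}(\Omega)$. Since $\mathrm{Fit}$ depends on $\mathbf{R}$ only through its single-time marginals, it suffices to lower-bound $\tau H(\mathbf{R}|\mathbf{W}^\tau)$. Let $\mathbf{R}_{t_1,\dots,t_T}$ be the joint law of the positions at the observation times. Disintegrating both $\mathbf{R}$ and $\mathbf{W}^\tau$ over these positions and applying the chain rule gives
\[
H(\mathbf{R}|\mathbf{W}^\tau) = H(\mathbf{R}_{t_1,\dots,t_T}|\mathbf{W}^\tau_{t_1,\dots,t_T}) + \int H\!\left(\mathbf{R}(\cdot|\mathbf{x})\,\middle|\,\mathbf{W}^\tau(\cdot|\mathbf{x})\right)d\mathbf{R}_{t_1,\dots,t_T}(\mathbf{x}),
\]
where $\mathbf{x}=(x_1,\dots,x_T)$ and $\mathbf{W}^\tau(\cdot|\mathbf{x})$ is the Wiener law conditioned to pass through $\mathbf{x}$. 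The conditional term is nonnegative and may be dropped; it remains to lower-bound the finite-dimensional term $H(\mathbf{R}_{t_1,\dots,t_T}|\mathbf{W}^\tau_{t_1,\dots,t_T})$.

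\textbf{Step 2 (Markov reduction to pairwise EOT).} Here I use the Markov property of $\mathbf{W}^\tau$: the reference $\mathbf{W}^\tau_{t_1,\dots,t_T}$ factorizes into the initial marginal and the Brownian transition kernels $p_{\tau\Delta t_i}$ on $[t_i,t_{i+1}]$. Writing out this product structure and applying the chain rule telescopically along the chain, I would reduce the minimization of $H(\mathbf{R}_{t_1,\dots,t_T}|\mathbf{W}^\tau_{t_1,\dots,t_T})$ subject to the fixed single-time marginals $\boldsymbol{\mu}^{(i)}=\mathbf{R}_{t_i}$ to a sum of independent two-marginal problems. After optimizing over the conditional couplings, each consecutive pair contributes exactly the entropic-OT cost $T_{\tau_i}(\boldsymbol{\mu}^{(i)},\boldsymbol{\mu}^{(i+1)})$ of \eqref{eq:eot}, the cost $c_{\tau_i}=-\tau_i\log p_{\tau_i}$ being produced precisely by the logarithm of the transition kernel, while the leftover single-marginal terms reassemble (up to constants) into $\tau H(\boldsymbol{\mu})$. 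Combined with Step 1 this yields $\mathcal{F}(\mathbf{R})\ge \mathrm{Fit}(\boldsymbol{\mu})+\sum_i T_{\tau_i}(\boldsymbol{\mu}^{(i)},\boldsymbol{\mu}^{(i+1)})+\tau H(\boldsymbol{\mu}) = F(\boldsymbol{\mu})$.

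\textbf{Step 3 (achievability and conclusion).} Conversely, given $\boldsymbol{\mu}$, I construct the candidate of item (ii): take optimal EOT plans $\gamma_i$ for $T_{\tau_i}(\boldsymbol{\mu}^{(i)},\boldsymbol{\mu}^{(i+1)})$, glue them into a Markov law $\mathbf{R}_{t_1,\dots,t_T}$ on $\mathcal{X}^{T}$ (consecutive states conditionally independent given the one between them), and fill in the Brownian bridges $\mathbf{W}^\tau(\cdot|\mathbf{x})$. This makes both chain-rule inequalities tight—the conditional bridge term vanishes and each pairwise problem is solved by $\gamma_i$—so $\mathcal{F}(\mathbf{R})=F(\boldsymbol{\mu})$. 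Hence $\min\mathcal{F}\le\min F$, while Step 2 gives $\min\mathcal{F}\ge\min F$, so the minima coincide. For (i), if $\mathbf{R}^*$ minimizes $\mathcal{F}$ then $F(\mathbf{R}^*_{t_1},\dots,\mathbf{R}^*_{t_T})\le\mathcal{F}(\mathbf{R}^*)=\min\mathcal{F}=\min F$, so its marginals minimize $F$. For (ii), the $\mathbf{R}^*$ built from a minimizer $\boldsymbol{\mu}^*$ of $F$ satisfies $\mathcal{F}(\mathbf{R}^*)=F(\boldsymbol{\mu}^*)=\min F=\min\mathcal{F}$, hence minimizes $\mathcal{F}$.

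\textbf{Main obstacle.} The delicate step is Step 2: proving that minimizing relative entropy against the \emph{Markov} reference with fixed single-time marginals decouples exactly into the consecutive entropic-OT costs and that the optimizer is itself Markov (the gluing / Markov property of multi-marginal Schr\"odinger bridges). This demands a rigorous treatment of the disintegration, verification of the telescoping additivity of relative entropy along the chain, and careful bookkeeping to confirm that the constants $c_{\tau_i}$ and the residual marginal-entropy terms line up with $T_{\tau_i}$ and $\tau H(\boldsymbol{\mu})$. One must also check the regularity/integrability conditions (finiteness of the entropies, suitable conditions on $\mathcal{X}$ and $\mathbf{W}^\tau$) that make the two chain-rule decompositions valid.
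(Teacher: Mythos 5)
Your proposal is correct and is essentially the argument the paper defers to: the paper gives no proof itself, citing \cite{chizat2022trajectoryinferencemeanfieldlangevin}, whose proof proceeds exactly via your two-stage entropy disintegration (bridge conditional term plus finite-dimensional term), the Markov telescoping bound reducing $H(\mathbf{R}_{t_1,\dots,t_T}|\mathbf{W}^\tau_{t_1,\dots,t_T})$ to pairwise entropic-OT costs plus the marginal entropies, and tightness via gluing the optimal plans and Brownian bridges. Your flagged obstacle (the decoupling/Markov-gluing step, and integrability against the $\sigma$-finite Lebesgue-initialized Wiener reference) is indeed where the cited proof does its technical work, so the sketch is faithful.
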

The composition of the transport plans is obtained as: 
\begin{equation} \label{eq:transport_compn}
\mathbf{R}_{t_1,\dots,t_T}(dx_1,\dots, dx_T) = \gamma_{1}(dx_1,dx_2)\gamma_{2}(dx_3|x_2)\cdots \gamma_{T-1}(dx_T|x_{T-1}),
\end{equation}
where the OT plans $\gamma_{i}(dx_i,dx_{i+1}) = \gamma_{i}(dx_{i+1}|x_i)\mu_i(dx_i)$ are conditional probabilities (or ``disintegrations''). The ``reduction'' of the optimization space from $\mathcal{P}(\Omega)$ to $\mathcal{P}(\mathcal{X})^{T}$ is enabled by the Markov property of $\mathbf{W}^{\tau}$. Thus, Theorem \ref{thm:representer} allows us to compute a minimizer for $\mathcal{F}$ from a minimizer for $F$ and its associated OT plans.

\subsection{Properties of $G$ and $F$}\label{app:properties_of_functional}
The \emph{first-variation} of $G: \mathcal{P}(\mathcal{X})^{T}\to\mathbb{R}$ is the unique (up to an additive constant) function $V[\boldsymbol{\mu}]\in \mathcal{C}(\mathcal{X})^{T}$ such that for all $\boldsymbol{\nu}\in\mathcal{P}(\mathcal{X})^{T}$, it holds \[
\lim_{\epsilon\downarrow 0}\frac{1}{\epsilon}[G((1-\epsilon)\boldsymbol{\mu}+\epsilon\boldsymbol{\nu})-G(\boldsymbol{\mu})] = \sum_{i=1}^T \int V^{(i)}[\boldsymbol{\mu}]d(\boldsymbol{\nu}-\boldsymbol{\mu})^{(i)}.
\]
We have the following properties of $G$ and $V$.
\begin{proposition}\label{prop:propertiesG_F}
    The function $G$ is convex separately in each of its input (but not jointly), weakly continuous, and its first-variation is given for $\boldsymbol{\mu}\in\mathcal{P}(\mathcal{X})^{T}$ and $i\in[T]$ by \[
    V^{(i)}[\boldsymbol{\mu}] = \frac{\delta\mathrm{Fit}}{\delta\boldsymbol{\mu}^{(i)}}[\boldsymbol{\mu}] + \frac{\varphi_{i,i+1}}{\Delta t_i} + \frac{\psi_{i,i-1}}{\Delta t_{i-1}}, \quad \frac{\delta \mathrm{Fit}}{\delta \boldsymbol{\mu}^{(i)}}[\boldsymbol{\mu}] : x\mapsto - \frac{\Delta t_i}{\lambda}\int \frac{g_\sigma(x-y)}{(g_\sigma * \boldsymbol{\mu}^{(i)})(y)}d\hat{\mu}_{t_i}(y),
    \]
    where $(\varphi_{i,j},\psi_{i,j})\in\mathcal{C}^\infty(\mathcal{X})$ are the Schr\"odinger potentials\footnote{See Appendix \ref{app:eot}.} for $T_{\tau'}(\boldsymbol{\mu}^{(i)},\boldsymbol{\mu}^{(j)})$, with the convention that the corresponding term vanishes when it involves $\psi_{1,0}$ or $\varphi_{T,T+1}$. The function $F$ is jointly convex and admits a unique minimizer $\boldsymbol{\mu}^{*}$, which has an absolutely continuous density characterized by \[
    (\boldsymbol{\mu}^*)^{(i)}\propto e^{-V^{(i)}[\boldsymbol{\mu}^*]/\tau},
    \]
    for $i\in [T]$.
\end{proposition}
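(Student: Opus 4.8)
The plan is to handle the three building blocks of $F=\operatorname{Fit}+\sum_{i=1}^{T-1}T_{\tau_i}(\boldsymbol\mu^{(i)},\boldsymbol\mu^{(i+1)})+\tau H(\boldsymbol\mu)$ separately, establish the first-variation formulas and the separate convexity/weak continuity of $G$, and then combine them to get joint convexity, uniqueness, and the Gibbs form. For $\operatorname{Fit}$, each summand $\mathrm{DF}^\sigma(\boldsymbol\mu^{(i)},\hat\mu_i)$ depends only on the $i$-th marginal and equals $\int-\log[(g_\sigma*\boldsymbol\mu^{(i)})(y)]\,d\hat\mu_i(y)$ up to additive constants. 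Differentiating along $(1-\epsilon)\boldsymbol\mu^{(i)}+\epsilon\boldsymbol\nu^{(i)}$ and applying the chain rule produces $-\frac{\Delta t_i}{\lambda}\int\frac{\int g_\sigma(x-y)\,d(\boldsymbol\nu-\boldsymbol\mu)^{(i)}(x)}{(g_\sigma*\boldsymbol\mu^{(i)})(y)}\,d\hat\mu_i(y)$, and Fubini rewrites this as $\int(\cdot)\,d(\boldsymbol\nu-\boldsymbol\mu)^{(i)}$, giving exactly the stated formula for $\frac{\delta\mathrm{Fit}}{\delta\boldsymbol\mu^{(i)}}$. Convexity of $\mathrm{DF}^\sigma$ in $\boldsymbol\mu^{(i)}$ follows from the relative-entropy representation in \eqref{eq:df}, and weak continuity holds because $g_\sigma$ is bounded and continuous, so $\boldsymbol\mu^{(i)}\mapsto(g_\sigma*\boldsymbol\mu^{(i)})(y)$ is weakly continuous and bounded below.

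For the transport terms I would use the dual formulation of Appendix \ref{app:eot}. Writing $T_{\tau_i}$ as a supremum over Schr\"odinger potentials $(\varphi,\psi)$, each dual objective is affine in $\boldsymbol\mu^{(i)}$ once $\boldsymbol\mu^{(i+1)}$ is fixed, since the nonlinear term $\int e^{(\varphi\oplus\psi-c)/\tau_i}\,d(\boldsymbol\mu^{(i)}\otimes\boldsymbol\mu^{(i+1)})$ is linear in $\boldsymbol\mu^{(i)}$ for fixed $\boldsymbol\mu^{(i+1)}$; hence $T_{\tau_i}$ is a supremum of affine functions and is convex separately in each argument, the bilinear $\boldsymbol\mu^{(i)}\otimes\boldsymbol\mu^{(i+1)}$ dependence being exactly what destroys joint convexity. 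The first variation in $\boldsymbol\mu^{(i)}$ then follows from the envelope (Danskin) theorem: at the optimal potentials only the explicit linear dependence contributes, returning the optimal $\boldsymbol\mu^{(i)}$-potential. Tracking the time rescaling $\tau_i=\tau\Delta t_i$ in the cost yields the factors $\varphi_{i,i+1}/\Delta t_i$ and $\psi_{i,i-1}/\Delta t_{i-1}$ from the two transport terms in which $\boldsymbol\mu^{(i)}$ appears, with the stated boundary conventions; adding the $\operatorname{Fit}$ contribution gives $V^{(i)}[\boldsymbol\mu]$.

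The crux, and the step I expect to be the main obstacle, is showing that $F$ is \emph{jointly} convex although $G$ is not. The key identity is $H(\gamma|\boldsymbol\mu^{(i)}\otimes\boldsymbol\mu^{(i+1)})=H(\gamma)-H(\boldsymbol\mu^{(i)})-H(\boldsymbol\mu^{(i+1)})$, which gives the decomposition $T_{\tau_i}(\boldsymbol\mu^{(i)},\boldsymbol\mu^{(i+1)})=\widetilde T_{\tau_i}(\boldsymbol\mu^{(i)},\boldsymbol\mu^{(i+1)})-\tau_i H(\boldsymbol\mu^{(i)})-\tau_i H(\boldsymbol\mu^{(i+1)})$, where $\widetilde T_{\tau_i}(\mu,\nu):=\min_{\gamma\in\Pi(\mu,\nu)}[\int c\,d\gamma+\tau_i H(\gamma)]$ is the partial minimization of a jointly convex integrand over the affinely constrained coupling set and is therefore jointly convex in $(\mu,\nu)$. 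Substituting into $F$, the concave pieces $-\tau_i H$ are collected against the explicit regularizer $\tau\sum_i H(\boldsymbol\mu^{(i)})$: the net coefficient of $H(\boldsymbol\mu^{(i)})$ equals $\tau(1-\Delta t_{i-1}-\Delta t_i)\ge 0$ (with $\tau(1-\Delta t_1)$ and $\tau(1-\Delta t_{T-1})$ at the endpoints), which is nonnegative because the $\Delta t_i$ sum to at most one. Thus $F$ is a sum of jointly convex $\operatorname{Fit}$ and $\widetilde T_{\tau_i}$ terms plus nonnegatively weighted entropies, hence jointly convex. This decomposition also transparently explains the failure of joint convexity of $G$ itself, the $-\tau_i H$ contributions being concave.

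Finally, existence follows from the direct method, since $F$ is weakly lower semicontinuous (weak continuity of $G$ plus lower semicontinuity of entropy) with tight sublevel sets from the confining entropy, and uniqueness is inherited from the strict convexity of the lifted path-space functional $\mathcal F(\mathbf R)=\operatorname{Fit}+\tau H(\mathbf R|\mathbf W^\tau)$ via the representer equivalence of Theorem \ref{thm:representer}, the relative entropy $H(\cdot|\mathbf W^\tau)$ being strictly convex. For the Gibbs characterization I would impose first-order optimality: as $\boldsymbol\mu^*$ lies in the interior of the space of probability measures, the first variation of $F$ in each block is constant on the support, and using $\frac{\delta}{\delta\boldsymbol\mu^{(i)}}\tau H=\tau(\log\boldsymbol\mu^{(i)}+1)$ together with $V^{(i)}$ gives $V^{(i)}[\boldsymbol\mu^*]+\tau\log(\boldsymbol\mu^*)^{(i)}=\alpha_i$ for a constant $\alpha_i$, which rearranges to $(\boldsymbol\mu^*)^{(i)}\propto e^{-V^{(i)}[\boldsymbol\mu^*]/\tau}$; positivity of this density yields absolute continuity, and I would note that this is a self-consistent fixed-point condition since $V^{(i)}[\boldsymbol\mu^*]$ depends on $\boldsymbol\mu^*$, with smoothness of the Schr\"odinger potentials transferring regularity to the density.
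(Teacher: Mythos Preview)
The paper does not prove this proposition at all; its ``proof'' is the single line ``See \cite{chizat2022trajectoryinferencemeanfieldlangevin}.'' So there is nothing in the paper to compare against beyond the fact that the result is imported wholesale from that reference.

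Your sketch is a faithful reconstruction of the argument one finds in that reference. The decomposition you identify as the crux, namely rewriting each transport term as $T_{\tau_i}=\widetilde T_{\tau_i}-\tau_i H(\boldsymbol\mu^{(i)})-\tau_i H(\boldsymbol\mu^{(i+1)})$ with $\widetilde T_{\tau_i}$ jointly convex, and then absorbing the concave $-\tau_i H$ pieces into the explicit regularizer $\tau\sum_i H(\boldsymbol\mu^{(i)})$ using $\Delta t_{i-1}+\Delta t_i\le 1$, is exactly the mechanism used there; the first-variation computations via differentiation under the integral for $\operatorname{Fit}$ and via the envelope theorem on the dual for the transport terms are likewise the standard route. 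Two points deserve a little more care than you give them: (i) the precise appearance of the factors $1/\Delta t_i$ on the Schr\"odinger potentials depends on the normalization convention for $(\varphi_{i,j},\psi_{i,j})$ adopted in the cited paper, so ``tracking the time rescaling'' should be made explicit rather than asserted; and (ii) weak continuity of $G$ requires not just the $\operatorname{Fit}$ argument you give but also weak continuity of $(\mu,\nu)\mapsto T_{\tau_i}(\mu,\nu)$, which is true on compact $\mathcal X$ but is worth stating. Otherwise your plan is correct and matches the source the paper defers to.
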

\begin{proof}
    See \cite{chizat2022trajectoryinferencemeanfieldlangevin}.
\end{proof}

% \subsection{Fokker-Planck PDE representation of \eqref{eq:mckean_vlasov}}

\subsection{Convergence of MFLD}
For convergence of the MFLD \cite{nitanda2022convex,chizat2022meanfieldlangevindynamicsexponential}, we start with the assumption on the (log-Sobolev inequality) LSI constant.
\begin{assumption}\label{asmp:lsi}
    There exists a constant $\CLSI > 0$ such that $\boldsymbol{\mu}^*$ satisfies a log-Sobolev inequality with constant $\CLSI$.
\end{assumption}

\cite{chizat2022trajectoryinferencemeanfieldlangevin,gu2025partially} utilize the Holley-Stroock perturbation argument \cite{holleystroock} to bound the LSI constant. Under this assumption, they show the exponential convergence in continuous-time and the infinite-particle limit, e.g. the number of particles is $m \to \infty$ in each distribution. 
\begin{theorem}[Convergence]\label{thm:exponential_convergence}
   Assume Assumption \ref{asmp:lsi}. Let $\boldsymbol{\mu}_0\in\mathcal{P}(\mathcal{X})^{T}$ be such that $F(\boldsymbol{\mu}_0)<+\infty$. Then for $\tau > 0$, there exists a unique solution $(\boldsymbol{\mu}_t)_{t\ge0}$ to \eqref{eq:mckean_vlasov}, and further it holds 
    $F_\tau(\boldsymbol{\mu}_t) - \min F_\tau \le \exp(-2\CLSI \tau t)(F_\tau(\boldsymbol{\mu}_0)-\min F_\tau)$.
\end{theorem}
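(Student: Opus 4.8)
The plan is to follow the standard mean-field Langevin recipe of \cite{nitanda2022convex,chizat2022meanfieldlangevindynamicsexponential}, specialized to the product space $\mathcal{P}(\mathcal{X})^{T}$ and the functional $F_\tau = F$ of \eqref{eq:functionF}. The central object is the \emph{proximal Gibbs measure} attached to a state $\boldsymbol{\mu}$, namely $\hat{\boldsymbol{\mu}}$ with components $\hat{\boldsymbol{\mu}}^{(i)}\propto e^{-V^{(i)}[\boldsymbol{\mu}]/\tau}$, which is exactly the Gibbs form appearing in the stationarity condition of Proposition \ref{prop:propertiesG_F}; in particular the unique minimizer $\boldsymbol{\mu}^*$ is its own proximal Gibbs measure. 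Three ingredients combine: (i) well-posedness of \eqref{eq:mckean_vlasov}, (ii) an exact free-energy dissipation identity, and (iii) the entropy sandwich together with the assumed log-Sobolev inequality. Since Assumption \ref{asmp:lsi} supplies the LSI, I do not need to re-derive it.

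First I would establish existence and uniqueness of $(\boldsymbol{\mu}_t)_{t\ge0}$. Because the first variation $V[\boldsymbol{\mu}]$ of Proposition \ref{prop:propertiesG_F} depends on $\boldsymbol{\mu}$ in a sufficiently regular (Lipschitz) fashion — the $\mathrm{Fit}$ contribution is smooth in $\boldsymbol{\mu}$ and the Schrödinger potentials depend smoothly on the marginals — the nonlinear McKean–Vlasov SDE \eqref{eq:mckean_vlasov} is well-posed by a standard fixed-point argument on a short horizon, extended globally using the a priori energy decrease derived below. This also yields the equivalent Fokker–Planck system \eqref{eq:fp_pde} and enough regularity of $\boldsymbol{\mu}_t$ to differentiate the free energy in time.

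Next comes the dissipation identity. Rewriting the drift, $\nabla V^{(i)}[\boldsymbol{\mu}_t] + \tau\nabla\log\boldsymbol{\mu}_t^{(i)} = \tau\nabla\log(\boldsymbol{\mu}_t^{(i)}/\hat{\boldsymbol{\mu}}_t^{(i)})$, so \eqref{eq:fp_pde} reads $\partial_t\boldsymbol{\mu}_t^{(i)} = \tau\,\nabla\cdot(\boldsymbol{\mu}_t^{(i)}\nabla\log(\boldsymbol{\mu}_t^{(i)}/\hat{\boldsymbol{\mu}}_t^{(i)}))$. Differentiating $F_\tau$ and using that its first variation at $\boldsymbol{\mu}_t$ equals $V^{(i)}[\boldsymbol{\mu}_t]+\tau(\log\boldsymbol{\mu}_t^{(i)}+1) = \tau\log(\boldsymbol{\mu}_t^{(i)}/\hat{\boldsymbol{\mu}}_t^{(i)})$ up to a constant, integration by parts gives
\[
\frac{d}{dt}F_\tau(\boldsymbol{\mu}_t) = -\tau^2\sum_{i=1}^T \int \Big|\nabla\log\frac{\boldsymbol{\mu}_t^{(i)}}{\hat{\boldsymbol{\mu}}_t^{(i)}}\Big|^2 d\boldsymbol{\mu}_t^{(i)} = -\tau^2\sum_{i=1}^T I\big(\boldsymbol{\mu}_t^{(i)}\,\big|\,\hat{\boldsymbol{\mu}}_t^{(i)}\big),
\]
where $I(\cdot|\cdot)$ denotes relative Fisher information.

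Finally I would convert this into exponential decay. The entropy sandwich, which follows from the convexity structure recorded in Proposition \ref{prop:propertiesG_F}, furnishes the lower bound $\tau\sum_i H(\boldsymbol{\mu}_t^{(i)}|\hat{\boldsymbol{\mu}}_t^{(i)}) \ge F_\tau(\boldsymbol{\mu}_t)-\min F_\tau$: one linearizes $G$ at $\boldsymbol{\mu}_t$, observes that the linearized-plus-entropy functional is minimized precisely at $\hat{\boldsymbol{\mu}}_t$ with gap exactly $\tau\sum_i H(\cdot|\hat{\boldsymbol{\mu}}_t^{(i)})$, and uses that this functional lower-bounds $F_\tau$. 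Applying Assumption \ref{asmp:lsi} componentwise, $I(\boldsymbol{\mu}_t^{(i)}|\hat{\boldsymbol{\mu}}_t^{(i)})\ge 2\CLSI\,H(\boldsymbol{\mu}_t^{(i)}|\hat{\boldsymbol{\mu}}_t^{(i)})$, and chaining the two bounds yields $\frac{d}{dt}(F_\tau(\boldsymbol{\mu}_t)-\min F_\tau)\le -2\CLSI\tau\,(F_\tau(\boldsymbol{\mu}_t)-\min F_\tau)$, after which Grönwall gives the claimed rate. The main obstacle is rigor rather than strategy: justifying the differentiation and integration by parts in the dissipation identity (integrability, decay, and smoothness of $\boldsymbol{\mu}_t$ and $\hat{\boldsymbol{\mu}}_t$ along the flow), and ensuring the proximal Gibbs measures remain regular enough that Assumption \ref{asmp:lsi} applies with a uniform constant. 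The convexity subtlety — that Proposition \ref{prop:propertiesG_F} gives only separate convexity of $G$ — is exactly the delicate point in the entropy-sandwich step, and is the issue handled in \cite{chizat2022trajectoryinferencemeanfieldlangevin,gu2025partially}.
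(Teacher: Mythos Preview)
The paper does not supply its own proof of Theorem \ref{thm:exponential_convergence}; it simply states the result and attributes it to \cite{chizat2022trajectoryinferencemeanfieldlangevin,gu2025partially}, with the underlying MFLD machinery drawn from \cite{nitanda2022convex,chizat2022meanfieldlangevindynamicsexponential}. Your sketch is exactly the proof strategy of those references --- proximal Gibbs measure, free-energy dissipation identity, entropy sandwich, LSI, Gr\"onwall --- and the two caveats you flag (the separate-versus-joint convexity of $G$ in the sandwich step, and the fact that the LSI must hold uniformly for the proximal Gibbs measures $\hat{\boldsymbol{\mu}}_t$ along the flow, not merely for $\boldsymbol{\mu}^*$ as Assumption \ref{asmp:lsi} literally states) are precisely the technical points those papers address.
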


\subsection{Remarks on discretization}
 We make some remarks on the discretization error, which for instance has been shown for mean-field neural networks in \cite{suzuki2023convergencemeanfieldlangevindynamics,nitanda2024improved,chewi2024uniform}. These works analyze the setting where the functional has $L_2$ or strongly convex regularization, which is beneficial for mean-field neural networks but undesirable in our setting. It is likely the discretized analysis will still hold without this regularization, but we defer this to future work on statistical guarantees for MFLD, and discuss the discretization error in the strongly-convex regularization setting. Furthermore, we remark that the fully discretized analysis in our setting will also be complicated by the fact that we need to take into account the estimation error of entropic OT maps (and thus the Schr\"odinger potentials) as well.

The convergence rate in \emph{discrete-time} is exponential under a uniform-in-$m$ LSI constant \cite{nitanda2024improved,chewi2024uniform}, utilizing the classical one-step interpolation argument \cite{vempalawibisono,suzuki2023convergencemeanfieldlangevindynamics}. The stochastic gradient incurs an $O\left(\frac{1}{\rho N}\right)$ error \cite{suzuki2023convergencemeanfieldlangevindynamics}. By uniform-in-time propagation of chaos \cite{chewi2024uniform}, the error between the discretization and the mean-field limit is $O\left(\frac{1}{m}\right)$ uniformly in time (over the optimization dynamics).

\section{Proofs}\label{app:proofs}
\begin{proposition}[Prop. {\ref{prop:rand_subset}}, restated]
    Let $\Tilde{T}\subset [T-1]$ be a random subset of size $z = \Omega(\log^2T)$. Then $\mathbb{E}[\max(\Tilde{T}_{i} - \Tilde{T}_{i-1})]
     = O\left(\frac{T}{z}\log z\right)$.
\end{proposition}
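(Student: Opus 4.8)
The plan is to translate the statement into a question about the \emph{spacings} of a uniformly random $z$-subset of $\{1,\dots,n\}$ with $n := T-1$, and then to bound the expected maximum spacing by a union bound over the $O(z)$ gaps rather than over the $\Theta(T)$ candidate positions, which is the key to obtaining $\log z$ instead of $\log T$. Writing the sorted subset as $\tilde T_1 < \cdots < \tilde T_z$, let $g_i$ denote the number of unselected indices strictly between $\tilde T_i$ and $\tilde T_{i+1}$, together with the two boundary gaps $g_0$ (before $\tilde T_1$) and $g_z$ (after $\tilde T_z$). Then $\tilde T_{i+1}-\tilde T_i = g_i + 1$, so the quantity of interest is $1 + \max_{1\le i\le z-1} g_i$, and it suffices to control $\max_i g_i$.

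First I would establish the exact tail of a single gap via a symmetry argument. The $z$ selected indices partition the $n-z$ unselected ones into the $z+1$ gaps $g_0,\dots,g_z\ge 0$ with $\sum_i g_i = n-z$, and this map is a bijection between $z$-subsets and nonnegative integer compositions of $n-z$ into $z+1$ parts. Since the subset is uniform, the composition is uniform, hence the gaps are exchangeable and
\[
\Pr[g_i \ge m] = \frac{\binom{n-m}{z}}{\binom{n}{z}} \le \Big(1-\tfrac{m}{n}\Big)^{z} \le e^{-mz/n},
\]
where the middle inequality follows from the termwise bound $\frac{n-m-j}{n-j}\le \frac{n-m}{n}$ applied to the product expansion of the binomial ratio.

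Next I would union-bound over the at most $z$ internal gaps to get $\Pr[\max_i g_i \ge m]\le z\,e^{-mz/n}$, and then integrate the tail using $\mathbb{E}[\max_i g_i] = \sum_{m\ge 1}\Pr[\max_i g_i\ge m]$. Splitting the sum at the threshold $m_0 := \lceil \tfrac{n}{z}\log z\rceil$, the head contributes at most $m_0 = O(\tfrac{T}{z}\log z)$, while the geometric tail $\sum_{m>m_0} z\,e^{-mz/n} = z\cdot \tfrac{e^{-m_0 z/n}}{1-e^{-z/n}}$ is $O(\tfrac{n}{z}) = O(\tfrac{T}{z})$, using $e^{-m_0 z/n}\le 1/z$ and $1-e^{-z/n}\ge \tfrac{z}{2n}$ (valid since $z\le n$). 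Adding back the trivial $+1$ yields the claimed $O(\tfrac{T}{z}\log z)$.

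The main obstacle, and the only genuinely nonroutine step, is exactly the decision to union over spacings rather than over positions: a naive union bound over the $\Theta(T)$ windows of consecutive indices gives only $O(\tfrac{T}{z}\log T)$, which is strictly weaker than the target whenever $z$ is sub-polynomial in $T$ (e.g.\ when $z$ is polylogarithmic in $T$). The exchangeability/compositions identity for $\Pr[g_i\ge m]$ is precisely what replaces $\log T$ by $\log z$. The hypothesis $z=\Omega(\log^2 T)$ is comfortably sufficient for the regime of interest: it keeps $\tfrac{T}{z}\log z$ as the dominant term over the lower-order $O(\tfrac{T}{z})$ contribution and makes the failure probability $o(1)$, so that Markov's inequality upgrades this expectation bound to the constant-probability statement invoked in the subsequent Corollary.
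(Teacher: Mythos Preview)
Your argument is correct and takes a genuinely different (and sharper) route than the paper. Both proofs start from the same single-block estimate $\Pr[\text{a fixed run of }m\text{ indices is missed}]=\binom{n-m}{z}/\binom{n}{z}\le e^{-mz/n}$, but the paper then union-bounds over the $\le T$ possible starting positions of such a run to get $\Pr[\max_i G_i\ge m]\le T\,e^{-mz/T}$, while you union-bound over the $O(z)$ spacings themselves, using the exchangeability of the uniform composition of $n-z$ into $z{+}1$ parts, to get $\Pr[\max_i g_i\ge m]\le (z{+}1)\,e^{-mz/n}$. After splitting at $m_0=\lceil\tfrac{n}{z}\log z\rceil$, your geometric tail is $O(n/z)$, so the total is $O(\tfrac{T}{z}\log z)$ for every $z\ge 2$. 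By contrast, the paper's tail after the same split is $\tfrac{T}{z}\cdot(1-e^{-z/T})^{-1}\asymp T^2/z^2$ when $z=o(T)$, which is $O(\tfrac{T}{z}\log z)$ only once $\log T=O(\log z)$, i.e.\ $z=T^{\Omega(1)}$; so the paper's final line does not actually follow for $z$ that is merely polylogarithmic in $T$, whereas your exchangeability step delivers the stated $\log z$ across the full claimed range. One small comment: your last paragraph overstates the role of the hypothesis $z=\Omega(\log^2 T)$ in your own argument; your expectation bound $O(\tfrac{T}{z}\log z)$ already holds for all $z\ge 2$, since $\log z\ge\log 2$ makes $\tfrac{T}{z}\log z$ dominate the $O(\tfrac{T}{z})$ tail contribution regardless.
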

\begin{proof}
    Choose $\tilde T=\{t_1 < \cdots < t_z\}$ uniformly from $[T-1]={1,\dots,T-1}$, with $z\ge1$ and add the endpoints $T_0 = 0$ and $T_{z+1}=T$. Define the random variables $G_i := t_{i} - t_{i-1}$ for $i \in [z+1]$. We note that 
    \begin{equation}\label{eq:a1}
        \Pr[\max G_i \ge t] \le \Pr[\exists \text{ a block of }t \text{ consecutive integers disjoint from }\tilde T].
    \end{equation}    
    Consider a consecutive block $B$ of $t$ points. We can bound the probability that no point in $\tilde T$ is in $B$ as follows:  \begin{align}
        \Pr[B \cap \tilde T = \emptyset] &= \frac{{T-t\choose z}}{{T\choose z}}\notag \\
        &= \prod_{i=1}^{z-1}\frac{T - t - i}{T - i}\notag \\
        &\le \left(1 - \frac{t}{T}\right)^z\notag \\
        &\le \exp\left(- \frac{zt}{T}\right).\label{eq:a2}
    \end{align}
    Using \eqref{eq:a1} and \eqref{eq:a2}, by a union bound, we have \begin{align}
        \Pr[\max G_i \ge t] \le T  \exp\left(- \frac{zt}{T}\right).
    \end{align}
    Now we have \begin{align*}
        \mathbb{E}[\max G_i] &= \sum_{t \in \mathbb{N}}\Pr[\max G_i\ge t]\\
        &= \sum_{t\le \frac{T}{z}\log z}\Pr[\max G_i \ge t] + \sum_{t > \frac{T}{z}\log z}\Pr[\max G_i \ge t]\\
        &\le \sum_{t\le \frac{T}{z}\log z}1 + \sum_{t>\frac{T}{z}\log z}\Pr[\max G_i\ge t]\\
        &\le \frac{T}{z}\log z + \sum_{t> \frac{T}{z}\log z} T\exp\left(- \frac{zt}{T}\right)\\
        &\le \frac{T}{z}\log z + T \cdot \frac{\exp\left(-z(\frac{T}{z}\log z))/T\right)}{1 - \exp(-z/T)}\\
        &\le \frac{T}{z}\log z + \frac{T}{z} \cdot \frac{1}{1 - \exp\left(-z/T\right)},
    \end{align*}
    from which the claim follows so long as $z \ge \log^2T$.
\end{proof}

% \section{Remarks on improving runtime}
% Due to the runtime of entropic OT, one iteration will take time $O(Tm^2)$, whereas the ideal runtime for the mean-field Langevin algorithm should be $O(Tm)$ per iteration. To alleviate this, it is possible to ``parallelize'' the entropic OT between two consective distributions by running $\sqrt{m}$ instances of entropic OT between disjoint sets of $\sqrt{m}\times \sqrt{m}$ problems, which we obtain by taking permutations of consecutive time-marginal distributions. This reduces the runtime of one iteration to $O(Tm^{3/2})$. Note that this can easily be run in parallel, which would imply a runtime of $O(Tm)$ (given sufficient threads). Using parallelization of entropic OT incurs $O(m^{-1/2})$ error due to the sample complexity of entropic OT \cite{genevay2019sample}. 

\section{Additional experiments}\label{app:experiments}

\begin{figure}
    \centering
    \includegraphics[width=0.45\textwidth]{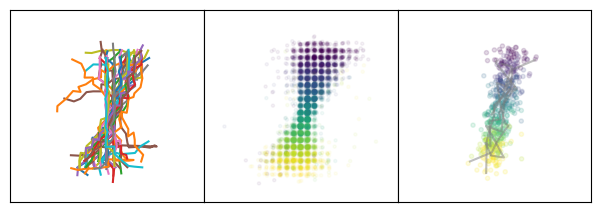}~\includegraphics[width=0.45\textwidth]{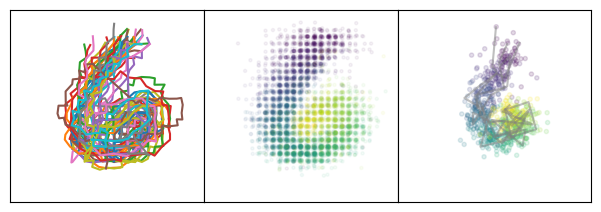}
    \includegraphics[width=0.45\textwidth]{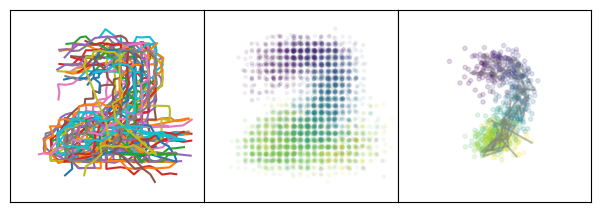}~\includegraphics[width=0.45\textwidth]{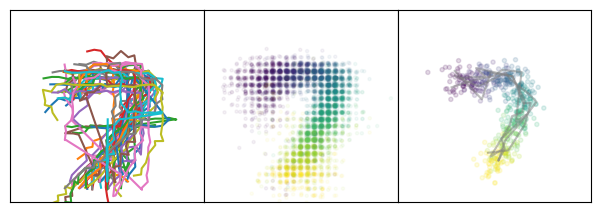}
    \includegraphics[width=0.45\textwidth]{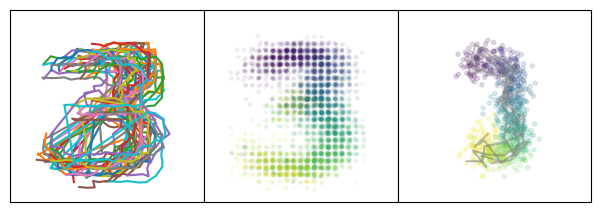}~\includegraphics[width=0.45\textwidth]{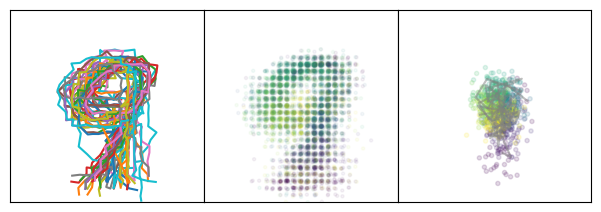}
    \caption{MNIST data.}
    \label{fig:main_all}
\end{figure}

\begin{table}
\centering
\begin{tabular}{|c|c|c|c|}
\hline
Digit & $N$ & $T$ & $\epsilon(\delta=10^{-4})$\\
\hline
   1 & 674 & 10 & 1.19 \\
\hline
  2 & 397 & 15 & 1.98 \\
\hline
3 & 408 & 15 & 1.92 \\
\hline
   6  & 394 & 15 & 1.05 \\
\hline
7 & 626 & 10 & 1.28 \\
\hline
9 & 396 & 15 & 1.98 \\
\hline
\end{tabular}
    \caption{Parameters corresponding to Figure \ref{fig:main_all}, with $\delta = 10^{-4}$.}
    \label{tab:main_all}
    \vspace{-0.7cm}
\end{table}

\begin{figure}[h]
    \centering
    \begin{subfigure}[t]{0.3\textwidth}
    \centering
    \includegraphics[width=0.8\textwidth]{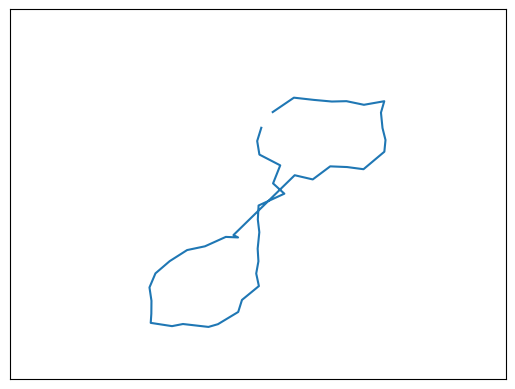}
    \caption{Classic 8.}
    \end{subfigure}~~~
    \begin{subfigure}[t]{0.3\textwidth}
    \centering
    \includegraphics[width=0.8\textwidth]{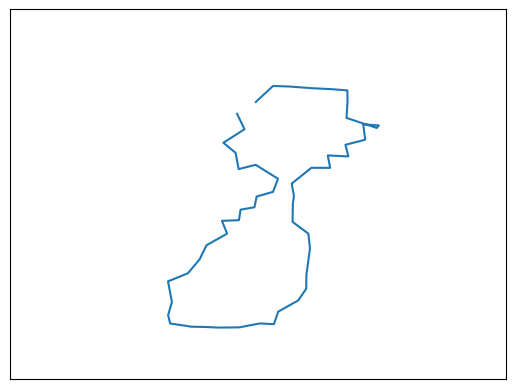}
    \caption{Incorrect 8.}
    \label{fig:incorrect8}
    \end{subfigure}
        \caption{Example of digit 8s.}
\end{figure}

We provide experiments without warm starts for MNIST in Figure \ref{fig:main_all}. Here, we use $\eta=0.01$, $\rho = \frac{5}{N}$, and initialization $\mathcal{N}((0.5, 0.5), 0.2\cdot I_2)$. We provide privacy budgets in Table \ref{tab:main_all}.

\end{document}